\newcommand{\xmark}{\ding{55}}
\newcommand\comment[1]{}
\newcommand{\softmax}{\mathrm{softmax}}
\newcommand{\rowsoftmax}{\mathrm{row}{\text -}\mathrm{softmax}}
\newcommand{\softmaxe}{\mathrm{softmax}_{\epsilon}}
\newcommand{\elemexp}{\mathrm{elementwise}\text{-}\mathrm{exp}}
\newcommand{\dsssoftmax}{\textsc{DSS}\textsubscript{\textsc{softmax}} }
\newcommand{\dssexp}{\textsc{DSS}\textsubscript{\textsc{exp}} }
\newcommand{\dssexpnoscale}{\textsc{DSS}\textsubscript{\textsc{exp-no-scale}} }
\newcommand{\R}{\mathbb{R}}
\newcommand{\C}{\mathbb{C}}
\renewcommand{\bar}{\overline}
\newtheorem{proposition}{Proposition}
\newtheorem*{proposition*}{Proposition}
\newtheorem{claim}{Claim}
\newtheorem*{claim*}{Claim}
\titlespacing\section{0pt}{6pt plus 4pt minus 2pt}{3.33pt plus 2pt minus 2pt}
\title{Diagonal State Spaces \\ are as Effective as Structured State Spaces}
\author{
  Ankit Gupta\thanks{work done while author was part of IBM AI Residency program.
%   \\\hspace*{12pt} {\scriptsize {\tt Email: \{ankitgupta.iitkanpur@gmail.com, albertgu@stanford.edu, joberant@cs.tau.ac.il\}}}.
} \\
  IBM Research \\
  {\tt {\small ankitgupta.iitkanpur@gmail.com}} 
  \And
  Albert Gu \\
  Stanford University \\
  {\tt {\small albertgu@stanford.edu}} \\
  \And
  Jonathan Berant \\
  Tel Aviv University \\
  {\tt {\small joberant@cs.tau.ac.il}} \\
}
\begin{document}
\maketitle

\begin{abstract}
Modeling long range dependencies in sequential data is a fundamental step towards attaining human-level performance in many modalities such as text, vision, audio and video. While attention-based models are a popular and effective choice in modeling short-range interactions, their performance on tasks requiring long range reasoning has been largely inadequate. In an exciting result, Gu et al.\ \cite{gu2022efficiently} proposed the \textit{Structured State Space} (S4) architecture delivering large gains over state-of-the-art models on several long-range tasks across various modalities. The core proposition of S4 is the parameterization of state matrices via a diagonal plus low rank structure, allowing efficient computation. In this work, we show that one can match the performance of S4 even without the low rank correction and thus assuming the state matrices to be diagonal. Our \textit{Diagonal State Space (DSS)} model matches the performance of S4 on Long Range Arena tasks, speech classification on Speech Commands dataset, while being conceptually simpler and straightforward to implement.
\end{abstract}

\section{Introduction}\label{sec:intro}

The Transformer architecture \cite{vaswani2017attention} has been successful across many areas of machine learning. Transformers pre-trained on large amounts of unlabelled text via a denoising objective have become the standard in natural language processing, exhibiting impressive amounts of linguistic and world knowledge \cite{Brown2020LanguageMA,Borgeaud2021ImprovingLM}. This recipe has also led to remarkable developments in the areas of vision \cite{Ramesh2021ZeroShotTG,Radford2021LearningTV} and speech \cite{dong2018speech,baevski2020wav2vec}.

The contextualizing component of the Transformer is the multi-head attention layer which, for inputs of length $L$, has an expensive $\Omega(L^2)$ complexity. This becomes prohibitive on tasks where the model is required to capture long-range interactions of various parts of a long input. To alleviate this issue, several Transformer variants have been proposed with reduced compute and memory requirements \cite{qiu2019blockwise,kitaev2020reformer,roy2020efficient,beltagy2020longformer,gupta2020gmat,katharopoulos2020transformers,Wang2020LinformerSW,Choromanski2020RethinkingAW,zhu-soricut-2021-h,gupta-etal-2021-memory} (cf.\ \cite{tay2020efficient} for a survey). Despite this effort, all these models have reported inadequate performance on benchmarks created to formally evaluate and quantify a model's ability to perform long-range reasoning (such as Long Range Arena \cite{tay2021long} and SCROLLS \cite{shaham2022scrolls}).

In a recent breakthrough result, Gu et al.\ \cite{gu2022efficiently} proposed S4, a sequence-to-sequence model that uses linear state spaces for contextualization instead of attention. It has shown remarkable performance on tasks requiring long-range reasoning in domains such as text, images and audio. For instance, on Long Range Arena it advances the state-of-the-art by $19$ accuracy points over the best performing Transformer variant. Its remarkable abilities are not limited to text and images and carry over to tasks such as time-series forecasting, speech recognition and audio generation \cite{goel2022sashimi}.

Despite S4's achievements, its design is complex and is centered around the HiPPO theory, which is a mathematical framework for long-range modeling \cite{Voelker2019LegendreMU,gu2020hippo,gu2021combining}. \cite{gu2022efficiently} showed that state space models with various alternative initializations perform poorly in comparison to initializing the state-space parameters with a particular HiPPO matrix. In order to leverage this matrix, they parameterize the learned state spaces using a Diagonal Plus Low Rank (DLPR) structure and, as a result, need to employ several reduction steps and linear algebraic techniques to be able to compute the state space output efficiently, making S4 difficult to understand, implement and analyze.

In this work, we show that it is possible to match S4's performance while using a much simpler, fully-diagonal parameterization of state spaces. While we confirm that random diagonal state spaces are less effective, we observe that there do in fact exist effective diagonal state matrices: simply removing the low-rank component of the DPLR HiPPO matrix still preserves its performance.
Leveraging this idea, our proposed Diagonal State Space (DSS) model enforces state matrices to be diagonal, making it significantly simpler to formulate, implement and analyze, while being provably as expressive as general state spaces. In contrast to S4, DSS does not assume any specialized background beyond basic linear algebra and can be implemented in just a few lines of code. Our implementation fits in a single page and is provided in \S\ref{sec:torch} (Figure \ref{app:torch}).

We evaluate the performance of DSS on Long Range Arena (LRA) which is a suite of sequence-level classification tasks with diverse input lengths ($1K$-$16K$) requiring similarity, structural, and visual-spatial reasoning over a wide range of modalities such as text, natural/synthetic images, and mathematical expressions. Despite its simplicity, DSS delivers an average accuracy of $81.88$ across the $6$ tasks of LRA, comparable to the state-of-the-art performance of S4 ($80.21$). In addition, DSS maintains a comfortable $20$ point lead over the best Transformer variant ($81.88$ vs $61.41$).

In the audio domain, we evaluate the performance of DSS on raw speech classification. On the Speech Commands dataset \cite{Warden2018SpeechCA}, which consists of raw audio samples of length $16K$, we again found the performance of DSS to be comparable to that of S4 ($98.2$ vs $98.1$).

% \end{itemize}

%\al{the phrase "we are positive that" sounds a bit strange. maybe "we believe that" or something idk}
To summarize, our results demonstrate that DSS is a simple and effective method for modeling long-range interactions in modalities such as text, images and audio. We believe that the effectiveness, efficiency and transparency of DSS can significantly contribute to the adoption of state space models over their attention-based peers. Our code is available at \url{https://github.com/ag1988/dss}.%\url{https://anonymized}.
\section{Background}\label{sec:background}

We start by reviewing the basics of time-invariant linear state spaces.
% \al{I generally find it useful for presentation to explicitly abstract out background and method/contributions. This is also helpful for readers/reviewers to know where your contributions start, also because this paper doesn't have an explicit related work section. So a potential suggestion is putting everything before 2.1 as Section 2: Background and starting 2.1 as Section 3: Method} \jb{strong +1}

\paragraph{State Spaces} A continuous-time state space model (SSM) parameterized by the state matrix $A \in \R^{N \times N}$ and vectors $B \in \R^{N \times 1}$, $C \in \R^{1 \times N}$ is given by the differential equation:
\begin{equation}\label{eqn:ss}
\frac{dx}{dt}(t) = Ax(t) + Bu(t)\ \ ,\ \ y(t) = Cx(t) 
\end{equation}
which defines a function-to-function map $u(t) \mapsto y(t)$. For a given value of time $t \in \R$, $u(t) \in \R$ denotes the value of the input signal $u$, $x(t) \in \R^{N \times 1}$ denotes the state vector and $y(t) \in \R$ denotes the value of the output signal $y$. We call a state space \emph{diagonal} if it has a diagonal state matrix.

\paragraph{Discretization} For a given sample time $\Delta \in \R_{> 0}$, the discretization of a continuous state space (Equation \ref{eqn:ss}) assuming \emph{zero-order hold}\footnote{assumes the value of a sample of $u$ is held constant for a duration of one sample interval $\Delta$.} on $u$ is defined as a sequence-to-sequence map from $(u_0,\ldots,u_{L-1}) = u \in \R^L$ to $(y_0,\ldots,y_{L-1}) = y \in \R^L$ via the recurrence,
\begin{equation}\label{eqn:discrete}
\begin{split}
&x_k = \bar{A}x_{k-1} + \bar{B}u_k\ \ \ ,\ \ \ y_k = \bar{C}x_k  \\[5pt]
&\bar{A} = e^{A\Delta}\ \;,\ \bar{B} = (\bar{A} - I)A^{-1}B\ ,\ \;\bar{C} = C\ .
\end{split}
\end{equation}
Assuming $x_{-1} = 0$ for simplicity, this recurrence can be explicitly unrolled as
\begin{equation}\label{eqn:unroll}
y_k \ = \ \sum_{j=0}^k \bar{C}\bar{A}^j\bar{B}\cdot u_{k-j}\ .
\end{equation} 
For convenience, the scalars $\bar{C}\bar{A}^k\bar{B}$ are gathered to define the SSM kernel $\bar{K} \in \R^L$ as
\begin{equation}\label{eqn:kernel}
\begin{split}
\bar{K} \ \ = \ \ (\bar{C}\bar{B}, \bar{C}\bar{A}\bar{B}, \ldots, \bar{C}\bar{A}^{L-1}\bar{B}) \ \ =\ \ (\ C e^{A\cdot k\Delta} (e^{A\Delta} - I)A^{-1}B\ )_{0 \leq k < L},
\end{split}
\end{equation}
where the last equality follows by substituting the values of $\bar{A}$, $\bar{B}$, $\bar{C}$ from Equation \ref{eqn:discrete}. Hence, 
\begin{equation}\label{eqn:unroll-kernel}
y_k \ = \ \sum_{j=0}^k \bar{K}_j\cdot u_{k-j}\ .
\end{equation}
Given an input sequence $u \in \R^L$, it is possible to compute the output $y \in \R^L$ sequentially via the recurrence in Equation \ref{eqn:discrete}. Unfortunately, sequential computation is prohibitively slow on long inputs and, instead, Equation \ref{eqn:unroll-kernel} can be used to compute all elements of $y$ in parallel, provided we have already computed $\bar{K}$.

% \al{use "and" instead of a comma for readability?}
\paragraph{Computing $y$ from $u$ and $\bar{K}$ is easy.} Given an input sequence $u \in \R^L$ and the SSM kernel $\bar{K} \in \R^L$, naively using Equation \ref{eqn:unroll-kernel} for computing $y$ would require $O(L^2)$ multiplications. Fortunately, this can be done much more efficiently by observing that for the univariate polynomials 
$$\bar{K}(z) = \sum_{i=0}^{L-1} \bar{K}_i z^i \ \ \text{and}\ \ u(z) = \sum_{i=0}^{L-1} u_i z^i ,$$ 
$y_k$ is the coefficient of $z^k$ in the polynomial $\bar{K}(z) \cdot u(z)$, i.e. all $y_k$'s can be computed simultaneously by multiplying two degree $L-1$ polynomials. It is well-known that this can be done in $O(L\log(L))$ time via Fast Fourier Transform (FFT) \cite{cormen}. We denote this fast computation of Equation \ref{eqn:unroll-kernel} via the discrete convolution as
\begin{equation}\label{eqn:convolution}
y = \bar{K} * u \ .
\end{equation}

Hence, given the SSM kernel $\bar{K} \in \R^L$, the output of a discretized state space can be computed efficiently from the input. The challenging part is computing $\bar{K}$ itself as it involves computing $L$ distinct matrix powers (Equation \ref{eqn:kernel}). Instead of directly using $A$, $B$, $C$, our idea is to use an alternate parameterization of state spaces for which it would be easier to compute $\bar{K}$.

\section{Method}\label{sec:model}

Having stated the necessary background, we now turn to the main contribution of our work. 

\subsection{Diagonal State Spaces}\label{sec:dss}

Our model is based on the following proposition which asserts that, under mild technical assumptions, diagonal state spaces are as expressive as general state spaces. We use the operator $\bar{K}_{\Delta, L}(A, B, C) \in \R^{1\times L}$ to denote the kernel (Equation \ref{eqn:kernel}) of length $L$ for state space $(A, B, C)$ and sample time $\Delta$. \vspace*{2pt}

\begin{proposition}\label{prop:diagonal} Let $K \in \R^{1\times L}$ be the kernel of length $L$ of a given state space $(A, B, C)$ and sample time $\Delta > 0$, where $A \in \C^{N \times N}$ is diagonalizable over $\C$ with eigenvalues $\lambda_1,\ldots,\lambda_N$ and $\forall i$, $\lambda_i \neq 0$ and $e^{L\lambda_i\Delta} \neq 1$. Let $P \in \C^{N \times L}$ be $P_{i,k} = \lambda_i k\Delta$ and $\Lambda$ be the diagonal matrix with $\lambda_1,\ldots,\lambda_N$. Then there exist $\widetilde{w}, w \in \C^{1\times N}$ such that
\begin{itemize}
    \item[(a)] $K\ \ =\ \ \bar{K}_{\Delta, L}(\Lambda,\ (1)_{1 \leq i \leq N},\ \widetilde{w})\ \ =\ \ \widetilde{w} \cdot \Lambda^{-1} (e^{\Lambda\Delta} - I) \cdot \elemexp(P)$,
    \item[(b)] $K\ \ =\ \ \bar{K}_{\Delta, L}(\Lambda,\ ((e^{L\lambda_i\Delta} - 1)^{-1})_{1\leq i \leq N},\ w)\ \ =\ \ w \cdot \Lambda^{-1} \cdot \rowsoftmax(P)$.
    % \item[(c)] $K$\ \ =\ \ $\bar{K}_{\Delta, L}(\Lambda,\ (\lambda_i(e^{\lambda_i\Delta} - 1)^{-1})_{1 \leq i \leq N},\ \widehat{w})$\ \ =\ \ $\widehat{w} \cdot \mathrm{elementwise}\text{-}\mathrm{exp}(P)$.
    % \begin{equation}
    % \hspace{-20pt}\textit{(a)}\ \ \ K\ \ =\ \ \bar{K}_{\Delta, L}(\Lambda,\ (1)_{1 \leq i \leq N},\ \widetilde{w})\ \ =\ \ \widetilde{w} \cdot \Lambda^{-1} (e^{\Lambda\Delta} - I) \cdot \mathrm{elementwise}\text{-}\mathrm{exp}(P)\ \ 
    % \end{equation}
\end{itemize}
\end{proposition}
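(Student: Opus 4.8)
The plan is to reduce both identities to a single componentwise formula for the kernel obtained by diagonalizing $A$, and then to read off (a) and (b) as two reparameterizations of that formula. First I would write $A = V\Lambda V^{-1}$ for some invertible $V \in \C^{N\times N}$ (possible since $A$ is diagonalizable) and substitute into the kernel expression of Equation \ref{eqn:kernel}, namely $K_k = C e^{A\cdot k\Delta}(e^{A\Delta}-I)A^{-1}B$. Because $e^{A\Delta}$, $A^{-1}$ and $e^{A\cdot k\Delta}$ are all power series in $A$, they share the eigenbasis $V$; conjugating through, the interior $V^{-1}V$ pairs cancel and one is left with
\begin{equation*}
K_k \ = \ \widetilde{C}\, e^{\Lambda k\Delta}(e^{\Lambda\Delta} - I)\Lambda^{-1}\,\widetilde{B}, \qquad \widetilde{C} := CV,\ \ \widetilde{B} := V^{-1}B,
\end{equation*}
where $\Lambda^{-1}$ exists precisely because every $\lambda_i \neq 0$. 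Since the three middle matrices are diagonal they commute, so writing $\widetilde{C} = (\widetilde c_i)_i$ and $\widetilde{B} = (\widetilde b_i)_i$ yields the scalar form $K_k = \sum_{i} \widetilde c_i \widetilde b_i\, \lambda_i^{-1}(e^{\lambda_i\Delta}-1)\,e^{\lambda_i k\Delta}$.

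For part (a) I would set $\widetilde w_i := \widetilde c_i \widetilde b_i$, which absorbs the input vector $\widetilde B$ into the output coefficients. This makes $K$ exactly the kernel of the diagonal state space $(\Lambda, (1)_i, \widetilde w)$, since that kernel is $\sum_i \widetilde w_i e^{\lambda_i k\Delta}(e^{\lambda_i\Delta}-1)\lambda_i^{-1}$. The claimed matrix factorization then follows by recognizing $\big(e^{\lambda_i k\Delta}\big)_{i,k} = \elemexp(P)$ and that $\widetilde w\,\Lambda^{-1}(e^{\Lambda\Delta}-I)$ is the length-$N$ row vector of coefficients $\widetilde w_i \lambda_i^{-1}(e^{\lambda_i\Delta}-1)$.

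For part (b) the key observation is that the row-softmax normalizer is a geometric sum: the $i$-th row of $P$ is $(\lambda_i k\Delta)_{0\le k<L}$, so $\sum_{k=0}^{L-1} e^{\lambda_i k\Delta} = (e^{L\lambda_i\Delta}-1)/(e^{\lambda_i\Delta}-1)$, which is well-defined and nonzero exactly under the hypothesis $e^{L\lambda_i\Delta}\neq 1$ (note this also forces $e^{\lambda_i\Delta}\neq 1$, so the geometric formula is valid). Hence
\begin{equation*}
\rowsoftmax(P)_{i,k} \ = \ \frac{e^{\lambda_i k\Delta}(e^{\lambda_i\Delta}-1)}{e^{L\lambda_i\Delta}-1}.
\end{equation*}
Taking the input vector $\big((e^{L\lambda_i\Delta}-1)^{-1}\big)_i$ exactly cancels this denominator, so setting $w_i := \widetilde w_i (e^{L\lambda_i\Delta}-1)$ reproduces the scalar formula of part (a). Equivalently $K$ is the kernel of $(\Lambda,\ ((e^{L\lambda_i\Delta}-1)^{-1})_i,\ w)$, and the factorization $K = w\,\Lambda^{-1}\rowsoftmax(P)$ follows directly.

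All the algebra is routine once the diagonalization is in hand; the only real content is (i) noticing that absorbing $\widetilde B$ into the coefficient vector reduces the general kernel to one with $B=(1)_i$, and (ii) spotting the geometric-series identity that converts the plain-exponential kernel of (a) into the softmax kernel of (b). I expect the main obstacle to be bookkeeping — keeping the row/column conventions and the commuting diagonal factors straight — rather than any conceptual difficulty, together with carefully invoking the two nonvanishing hypotheses at precisely the points where division occurs (forming $\Lambda^{-1}$ and the softmax normalizer).
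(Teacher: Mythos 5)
Your proof is correct and follows essentially the same route as the paper's: diagonalize $A = V\Lambda V^{-1}$, absorb $CV$ and $V^{-1}B$ elementwise into $\widetilde{w}$, and pass to part (b) via the geometric-series identity $\sum_{k=0}^{L-1} e^{\lambda_i k\Delta} = (e^{L\lambda_i\Delta}-1)/(e^{\lambda_i\Delta}-1)$ with $w_i = \widetilde{w}_i(e^{L\lambda_i\Delta}-1)$. Your explicit remark that $e^{L\lambda_i\Delta}\neq 1$ forces $e^{\lambda_i\Delta}\neq 1$ is a small clarity improvement over the paper's terser justification.
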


The proof of Proposition \ref{prop:diagonal} is elementary and is provided in \S\ref{app:diagonal}. In the above equations, the last equality follows by using Equation \ref{eqn:kernel} to explicitly compute the expression for the kernel of the corresponding diagonal state space. Hence, for any given state space with a well-behaved state matrix there exists a diagonal state space computing the same kernel.\footnote{It is possible for norms of the parameters of the resulting diagonal state spaces to be much larger than that of the original state space. For example, this occurs for the HiPPO matrix \cite{gu2022efficiently}.} More importantly, the expressions for the kernels of the said diagonal state spaces no longer involve matrix powers but only a structured matrix-vector product. 
% \footnote{Note that Proposition \ref{prop:diagonal} holds algebraically and does not account for numerical properties of the diagonalization. It is possible for norms of the parameters of the resulting diagonal state spaces to be much larger than that of the original state space, which occurs for the HiPPO matrix and is why S4 could not use a pure diagonalization~\cite{gu2022efficiently}.} More importantly, the expressions for the kernels of the said diagonal state spaces no longer involve matrix powers but only a structured matrix-vector product. 
% \al{unpacked the footnote a bit above}

Proposition \ref{prop:diagonal}(a) suggests that we can parameterize state spaces via $\Lambda, \widetilde{w} \in \C^N$ and simply compute the kernel as shown in Proposition \ref{prop:diagonal}(a). Unfortunately, in practice, the real part of the elements of $\Lambda$ can become positive during training making the training unstable on long inputs. This is because the matrix $\elemexp(P)$ contains terms as large as $\mathrm{exp}(\lambda_i \Delta (L-1))$ which even for a modest value of $L$ can be very large. To address this, we propose two methods to model diagonal state spaces.

% \vspace{-10pt}

\paragraph{\dssexp} In this variant, we use Proposition \ref{prop:diagonal}(a) to model our state space but restrict the real part of elements of $\Lambda$ to be negative. \dssexp has parameters $\Lambda_\mathrm{re}, \Lambda_\mathrm{im} \in \R^N$, $\widetilde{w} \in \C^N$ and $\Delta_{\log} \in \R$. $\Lambda$ is computed as $-\elemexp(\Lambda_\mathrm{re}) + i\cdot \Lambda_\mathrm{im}$ where $i = \sqrt{-1}$ \cite{goel2022sashimi}. $\Delta$ is computed as $\mathrm{exp}(\Delta_{\log}) \in \R_{> 0}$ and the kernel is then computed via Proposition \ref{prop:diagonal}(a). 
% \al{might want to cite Sashimi here since this is where we first used this technique, for the exact motivation to control the spectrum}

\dssexp provides a remarkably simple computation of state space kernels but restricts the space of the learned $\Lambda$ (the real part must be negative). It is not clear if such a restriction could be detrimental for some tasks, and we now present an alternate method that provides the simplicity of Proposition \ref{prop:diagonal}(a) while being provably as expressive as general state spaces.

% \vspace{-10pt}

\paragraph{\dsssoftmax} Instead of restricting the elements of $\Lambda$, another option for bounding the elements of $\elemexp(P)$ is to normalize each row by the sum of its elements, which leads us to Proposition \ref{prop:diagonal}(b). \dsssoftmax has parameters $\Lambda_\mathrm{re}, \Lambda_\mathrm{im} \in \R^N$, $w \in \C^N$ and $\Delta_{\log} \in \R$. $\Lambda$ is computed as $\Lambda_\mathrm{re} + i\cdot \Lambda_\mathrm{im}$, $\Delta$ is computed as $\mathrm{exp}(\Delta_{\log}) \in \R_{> 0}$, and the kernel is then computed via Proposition \ref{prop:diagonal}(b). A sketch of this computation is presented in Algorithm \ref{alg:dss}. We note that, unlike over $\R$, $\softmax$ can have singularities over $\C$ and slight care must be taken while computing it (e.g., $\mathrm{softmax}(0,i\pi)$ is not defined). We instead use a corrected version $\softmaxe$ and detail it in \S\ref{app:softmax}.

% Before describing our method, we quickly recall the definition of $\softmax$ normalization. Given $(x_0,\ldots,x_{L-1}) = x \in \C^L$, define $\softmax(x) \in \C^L$ as $(\softmax(x))_k = e^{x_k} / (e^{x_0}+\ldots+ e^{x_{L-1}})$.

\begin{figure}[t]
  \small
  \renewcommand{\figurename}{Algorithm}
  \begin{algorithmic}[1]
    \hrule
    \renewcommand{\algorithmicrequire}{\textbf{Input:}}
    \Require{parameters \( \Lambda_\mathrm{re}, \Lambda_\mathrm{im} \in \R^N, w \in \C^N \), sample time \(\ \Delta_{\log} \in \R\)}
    \renewcommand{\algorithmicensure}{\textbf{Output:}}
    \Ensure{SSM kernel \( \bar{K} \in \R^L \)} \ \ (Proposition \ref{prop:diagonal}(b))
    \vspace{3pt}
    \hrule
    \vspace{3pt}
    \State $\Lambda \gets \Lambda_\mathrm{re} + i\cdot \Lambda_\mathrm{im}$\ \ \ ,\ \ \ $\Delta \gets \exp{(\Delta_{\log})}$    \Comment{$\Delta$ is positive real}
    % \State $\mathrm{pos} \gets [0, 1, \ldots L-1]_{1\times L}$
    \State $P_{N \times L} \gets (\Delta * \Lambda)_{N \times 1} \ * \ [0, 1, \ldots L-1]_{1\times L}$
    \Comment{outer product}
    \State $S \gets \rowsoftmax_\epsilon(P)$ \Comment{numerically stable (\S\ref{app:softmax})}
    \State \( \bar{K} \gets \mathrm{Re}(\ (w / \Lambda)_{1 \times N} \cdot S\ ) \)
    \Comment{real part}
    \vspace{3pt}
    \hrule
  \end{algorithmic}
  \caption{\dsssoftmax \textsc{Kernel (Sketch)}}\label{alg:dss}
\end{figure}

\subsection{Diagonal State Space (DSS) Layer}\label{sec:dss-layer}

We are now ready to describe the DSS layer. We retain the skeletal structure of S4 and simply replace the parameterization and computation of the SSM kernel by one of the methods described in \S\ref{sec:dss}.

Each DSS layer receives a length-$L$ sequence $u$ of $H$-dimensional vectors as input, i.e., $u \in \R^{H\times L}$, and produces an output $y \in \R^{H\times L}$. The parameters of the layer are $\Lambda_\mathrm{re}, \Lambda_\mathrm{im} \in \R^N$, $\Delta_{\log} \in \R^{H}$ and $W \in \C^{H \times N}$. For each coordinate $h=1,\ldots,H$, a state space kernel $\bar{K}_h \in \R^L$ is computed using one of methods described in \S\ref{sec:dss}. For example, in case of \dsssoftmax, Algorithm \ref{alg:dss} is used with parameters $\Lambda_\mathrm{re}, \Lambda_\mathrm{im} \in \R^{N}$, $W_h \in \C^{N}$ and $(\Delta_{\log})_h \in \R$. The output $y_h \in \R^L$ for coordinate $h$ is computed from $u_h \in \R^L$ and $\bar{K}_h$ using Equation \ref{eqn:convolution}.

Following S4, we also add a residual connection from $u$ to $y$ followed by a GELU non-linearity \cite{hendrycks2016gelu}. This is followed by a position-wise linear projection $W_{\text{out}} \in \R^{H\times H}$ to enable information exchange among the $H$ coordinates.

The DSS layer can be implemented in just a few lines of code and our PyTorch implementation of \dsssoftmax layer is provided in \S\ref{sec:torch} (Figure \ref{app:torch}). The implementation of \dssexp layer is even simpler and is omitted.
% \al{For references to the Appendix, prefer "Appendix blah, Figure blah"}

\paragraph{Complexity} For batch size $B$, sequence length $L$ and hidden size $H$, the DSS layer requires $O(NHL)$ time and space to compute the kernels, $O(BHL\log(L))$ time for the discrete convolution and $O(BH^2L)$ time for the output projection. For small batch size $B$, the time taken to compute the kernels becomes important whereas for large batches more compute is spent on the convolution and the linear projection. The kernel part of DSS layer has $2N + H + 2HN$ real-valued parameters.

\subsection{Initialization of DSS layer}\label{sec:dss-init}

The performance of state spaces models is known to be highly sensitive to initialization \cite{gu2022efficiently}. In line with the past work, we found that carefully initializing the parameters of the DSS layer is crucial to obtain state-of-the-art performance (\S\ref{sec:experiments}).

The real and imaginary parts of each element of $W$ are initialized from $\mathcal{N}(0,1)$. Each element of $\Delta_{\log}$ is initialized as $e^r$ where $r \sim \mathcal{U}(\log(.001), \log(.1))$.\ $\Lambda \in \C^{N}$ is initialized using eigenvalues of the normal part of normal plus low-rank form of HiPPO matrix \cite{gu2022efficiently}. Concretely, $\Lambda_\mathrm{re}, \Lambda_\mathrm{im}$ are initialized such that the resulting $\Lambda$ is the vector of those $N$ eigenvalues of the following $2N \times 2N$ matrix which have a positive imaginary part.
\begin{equation*}
\begin{cases}
    (2i+1)^{1/2}(2j+1)^{1/2}/2  & i < j  \\
    -1/2                        & i = j  \\
    -(2i+1)^{1/2}(2j+1)^{1/2}/2 & i > j  
\end{cases}
\end{equation*}
Henceforth, we would refer to the above initialization of $\Lambda$ as \emph{Skew-Hippo} initialization. 
In all our experiments, we used the above initialization with $N=64$. The initial learning rate of all DSS parameters was $10^{-3}$ and weight decay was not applied to them. Exceptions to these settings are noted in \S\ref{sec:experimental-setup}. 

\subsection{States of DSS via the Recurrent View}\label{sec:rnn}

In \S\ref{sec:model}, we argued that it can be slow to compute the output of state spaces via Equation \ref{eqn:discrete} and instead leveraged Equation \ref{eqn:unroll-kernel} for fast computation. But in situations such as autoregressive decoding during inference, it is more efficient % \jb{is essential?} 
to explicitly compute the states of a state space model similar to a linear RNN (Equation \ref{eqn:discrete}). We now show how to compute the states of the $\mathrm{DSS}$ models described in \S\ref{sec:dss}. Henceforth, we assume that we have already computed $\Lambda$ and $\Delta$. 

\paragraph{\dssexp} As stated in Proposition \ref{prop:diagonal}(a), \dssexp computes $\bar{K}_{\Delta, L}(\Lambda,\ (1)_{1 \leq i \leq N},\ \widetilde{w})$. For this state space and sample time $\Delta$, we use Equation \ref{eqn:discrete} to obtain its discretization 
\begin{equation*}
\bar{A} = \mathrm{diag}(e^{\lambda_1\Delta}, \ldots, e^{\lambda_N\Delta}) \ \ \ , \ \ \ \bar{B} = \left( \lambda_i^{-1} (e^{\lambda_i\Delta} - 1) \right)_{1\leq i \leq N} 
\end{equation*}
where $\mathrm{diag}$ creates a diagonal matrix of the scalars. We can now compute the states using the SSM recurrence $x_{k} = \bar{A} x_{k-1} + \bar{B}u_k$ (Equation \ref{eqn:discrete}).
As $\bar{A}$ is diagonal, the $N$ coordinates do not interact and hence can be computed independently. Let us assume $x_{-1} = 0$ and say we have already computed $x_{k-1}$. Then, for the $i$'th coordinate independently compute
\begin{equation*}
x_{i,k} = e^{\lambda_i\Delta} x_{i,k-1} + \lambda_i^{-1} (e^{\lambda_i\Delta} - 1)u_k \ .
\end{equation*}

Note that in \dssexp , $\mathrm{Re}(\lambda_i) < 0$ and hence $|e^{\lambda_i\Delta}| = |e^{\mathrm{Re}(\lambda_i)\Delta}| < 1$. Intuitively, if $|\lambda_i|\Delta \approx 0$, we would have $x_{i,k} \approx x_{i,k-1}$ and be able to copy history over many timesteps. On the other hand if $\mathrm{Re}(\lambda_i)\Delta \ll 0$ then $x_{i,k} \approx -\lambda_i^{-1}u_k$ and hence the information from the previous timesteps would be forgotten similar to a ``forget'' gate in LSTMs. % This is reminiscent of prior works on explicitly deriving a sigmoid-based forget gate via discretizations of ODEs \cite{tallec2018can,gu2021combining}.

% \al{Actually the LSSL made this explicit (Lemma 3.1) so perhaps that's worth a reference too. e.g. "In fact, prior work has showed that the sigmoid forget gate can be explicitly derived through a particular discretization of an ODE \cite{tallec2018can,gu2021combining}."}

\paragraph{\dsssoftmax} As stated in Proposition \ref{prop:diagonal}(b), \dsssoftmax computes $\bar{K}_{\Delta, L}(\Lambda,\ ((e^{L\lambda_i\Delta} - 1)^{-1})_{1\leq i \leq N},\ w)$. For this state space and sample time $\Delta$, we obtain the discretization
\begin{equation*}
\bar{A} = \mathrm{diag}(e^{\lambda_1\Delta}, \ldots, e^{\lambda_N\Delta}) \ \ \ , \ \ \ 
\bar{B} = \left( {e^{\lambda_i\Delta} - 1 \over \lambda_i (e^{\lambda_i\Delta L} - 1)} \right)_{1\leq i \leq N} \ . 
\end{equation*}

For the $i$'th coordinate we can independently compute
\begin{equation*}
x_{i,k} = e^{\lambda_i\Delta} x_{i,k-1} + {u_k(e^{\lambda_i\Delta} - 1) \over \lambda_i (e^{\lambda_i\Delta L} - 1)} \ .
\end{equation*}

Let us drop the coordinate index $i$ for clarity to obtain
\begin{equation*}
x_{k} = e^{\lambda\Delta} x_{k-1} + {u_k(e^{\lambda\Delta} - 1) \over \lambda (e^{\lambda\Delta L} - 1)} \ .
\end{equation*}
where $x_{k-1}$ is a scalar. As the expression involves the term $e^{\lambda\Delta L}$, where $L$ can be large, directly computing such terms can result in numerical instability and we must avoid exponentiating scalars with a positive real part. We make two cases depending on the sign of $\mathrm{Re}(\lambda)$ and compute $x_k$ via an intermediate state $\widetilde{x}_{k}$ as follows. Let $p = I[\mathrm{Re}(\lambda) > 0] \in \{0,1\}$. Then,
\begin{equation*}
\widetilde{x}_{k} = e^{\lambda\Delta (1-p)}\cdot \widetilde{x}_{k-1} + e^{-k\lambda\Delta p} \cdot u_k  \ \ \ \ ,\ \ \ \ x_k = \widetilde{x}_k \cdot {e^{\lambda\Delta p (k-(L-1))} \over \lambda}\cdot {e^{\lambda\Delta (1 - 2p)}-1 \over e^{\lambda\Delta (1 - 2p) L}-1 }  \ .
\end{equation*}
The above equation can be parsed as follows. If $\mathrm{Re}(\lambda) \leq 0$ then we have 
\begin{equation*}
\widetilde{x}_{k} = e^{\lambda\Delta}\cdot \widetilde{x}_{k-1} + u_k  \ \ \ \ ,\ \ \ \ x_k = \widetilde{x}_k \cdot {(e^{\lambda\Delta} - 1) \over \lambda (e^{\lambda\Delta L} - 1) }
\end{equation*}
and hence if $\mathrm{Re}(\lambda) \ll 0$ then $\widetilde{x}_k \approx u_k$ and $x_k \approx u_k / \lambda$. In this case, information from the previous timesteps would be ignored and the information used would be local. On the other hand if $\mathrm{Re}(\lambda) > 0$ then we would have 
\begin{equation*}
\widetilde{x}_{k} = \widetilde{x}_{k-1} + e^{-k\lambda\Delta} \cdot u_k  \ \ \ \ ,\ \ \ \ x_k = \widetilde{x}_k \cdot {e^{\lambda\Delta (k-(L-1))} \over \lambda}\cdot {e^{-\lambda\Delta}-1 \over e^{-\lambda\Delta L} - 1 }
\end{equation*}
and hence if $\mathrm{Re}(\lambda) \gg 0$ then $\widetilde{x}_0 \approx u_0$ and $\widetilde{x}_k \approx \widetilde{x}_{k-1} \approx u_0$, $x_{k < L-1} \approx 0$ and $x_{L-1} \approx u_0 / \lambda$. Hence, the model would be able to copy information even from extremely distant positions, allowing it to capture long-range dependencies.

\section{Experiments}\label{sec:experiments}

\begin{table*}[t]\setlength{\tabcolsep}{3pt} %% default is 6pt
    \footnotesize
    \centering
    \begin{tabular}{@{}lccccccc@{}}
        \toprule
        \textsc{Model}    & \textsc{ListOps}  & \textsc{Text}     & \textsc{Retrieval} & \textsc{Image}    & \textsc{Pathfinder} & \textsc{Path-X} & \textsc{Avg}   \vspace{3pt}   \\ %
        (input length)  & 2000  &  2048     &  4000    &  1024    & 1024  & 16384 &       \\ %
        \midrule
        Transformer       & 36.37             & 64.27             & 57.46              & 42.44             & 71.40               & \xmark          & 53.66             \\ %
        Reformer          & 37.27 & 56.10             & 53.40              & 38.07             & 68.50               & \xmark          & 50.56             \\ %
        BigBird           & 36.05             & 64.02             & 59.29              & 40.83             & 74.87               & \xmark          & 54.17             \\ %
        Linear Trans.     & 16.13             & 65.90 & 53.09              & 42.34             & 75.30               & \xmark          & 50.46             \\ %
        Performer         & 18.01             & 65.40             & 53.82              & 42.77             & 77.05               & \xmark          & 51.18             \\ %
        \midrule
        FNet              & 35.33             & 65.11             & 59.61              & 38.67             & 77.80   & \xmark          & 54.42             \\ %
        Nystr{\"o}mformer & 37.15             & 65.52             & 79.56              & 41.58             & 70.94               & \xmark          & 57.46             \\ %
        Luna-256          & 37.25             & 64.57             & 79.29  & 47.38 & 77.72               & \xmark          & 59.37 \\ %
        H-Transformer-1D  & 49.53 & 78.69 & 63.99 & 46.05 & 68.78 & \xmark & 61.41 \\
        \midrule
        S4 (as in \cite{gu2022efficiently})     & 58.35    & 76.02    & 87.09     & \textbf{87.26}     & 86.05      & \textbf{88.10}  & 80.48    \\ %
        S4 (\textbf{our run}) &   57.6  &    75.4   & 87.6      &     86.5 &     \textbf{86.2}  & 88.0  &  80.21  \vspace{3pt}\\ % \\ %
        % \textbf{DSS (ours)} &  57.6    &  \textbf{76.6}   &    \textbf{87.6}    &  85.8  &  84.1     & 85.0  &  79.45   \vspace{4pt}\\ %
        \textbf{\dsssoftmax (ours)} &  \textbf{60.6}    &  \textbf{84.8}   &    \textbf{87.8}    &  85.7  &  84.6     & 87.8  &  \textbf{81.88}   \\ %

        \textbf{\dssexp (ours)} &  59.7    &  84.6   &   87.6    &  84.9  &  84.7     & 85.6  &  81.18   \\
        
        \textbf{\dssexpnoscale (ours)} &  59.3    &  82.4   &   86.0    & 81.2  &    81.3   & \xmark   &  65.03     \\ %
        \bottomrule
    \end{tabular}
    \caption{(\textbf{Long Range Arena}) Accuracy on the full suite of LRA tasks. (\textit{Top}) Transformer variants reported in LRA, (\textit{Middle}) other long-range models reported in the literature, (\textit{Bottom}) state space models.  
    % \al{Probably move (input length) to top right under dataset}
    }\label{table:lra}
\end{table*}

We evaluate the performance of DSS on sequence-level classification tasks over text, images, audio. Overall, we find its performance is comparable to S4.

\paragraph{Long Range Arena (LRA)} LRA \cite{tay2021long} is a standard benchmark for assessing the ability of models to process long sequences. LRA contains $6$ tasks with diverse input lengths $1K$-$16K$, encompassing modalities such as text and images. Several Transformer variants have been benchmarked on LRA but all have underperformed due to factors such as their high compute-memory requirements, implicit locality bias and inability to capture long-range dependencies.

Table \ref{table:lra} compares DSS against S4, the Transformer variants reported in \cite{tay2021long}, as well as follow-up work. State space models (S4, DSS) shown in Table \ref{table:lra} are left-to-right \textit{unidirectional} whereas other models could be bidirectional.

Despite its simplicity, DSS delivers state-of-the-art performance on LRA. Its performance is comparable to that of S4, with a modest improvement in test accuracy averaged across the 6 tasks ($81.88$ vs $80.21$).\footnote{The large gap between S4 and DSS on \textsc{Text} is due to the use of a larger learning rate for $\Delta_{\log}$ in DSS. For our S4 runs, we decided to use the official hyperparameters as provided by \cite{gu2022efficiently}.} In addition, DSS maintains a comfortable $20$ point lead over the best performing Transformer variant ($81.88$ vs $61.41$).

Interestingly, despite being less expressive than \dsssoftmax, \dssexp also reports an impressive performance which makes it specially appealing given its simplicity during training and inference. We investigate an even simpler version of \dssexp, denoted as \dssexpnoscale, which is same as \dssexp except that we omit the term $\Lambda^{-1} (e^{\Lambda\Delta} - I)$ in the expression of the kernel shown in Proposition \ref{prop:diagonal}(a) and instead compute it as $\widetilde{w} \cdot \elemexp(P)$. As shown in Table \ref{table:lra}, the performance of \dssexpnoscale is generally inferior to that of \dssexp with the model failing on the challenging \textsc{Path-X} task which requires the model to capture extremely long-range interactions.

% \al{Some general comments here: (1) I believe that the parameterization/hyperparameters you're using aren't exactly the same against the baselines? for example the technique you found of raising the LR on $\Delta$ also boosts S4 a bunch on IMDB, and that's the main gap in the table. the reason this could be worth a mention is that the models are similar enough that a large gap on a dataset may raise eyebrows. (2) similarly, it's worth considering whether you want to include the exp-no-scale experiment. it may jump out of place because it was not signposted earlier in the paper. keep in mind that many readers will just skim the tables and anything that jumps out or looks out of place may confuse them - the reader won't know what this row is and what purpose it serves, unless this ablation was something that was explicitly signposted earlier in the paper (e.g., if you wanted to emphasize the importance of the denoinator of ZOH when defining discretization)}

\paragraph{Raw Speech Classification} Audio is typically digitized using a high sampling rate resulting in very long sequences. This provides an interesting domain for investigating the abilities of long-range models. We evaluate the performance of DSS on the Speech Commands (\textsc{SC}) dataset \cite{Warden2018SpeechCA}, consisting of raw audio samples of length $16000$, modeled as a $10$-way classification task. As shown in Table \ref{table:speech-commands}, the performance of all DSS variants is comparable to that of S4 ($98.2$ vs $98.1$).

\begin{table}[t]%\setlength{\tabcolsep}{4pt} %% default is 6pt
    \footnotesize
    \centering
\begin{tabular}{@{}llllllllll@{}}
    \toprule
    \textsc{Model}             & \textsc{MFCC}     & \textsc{Raw}          \\
    \midrule
    Transformer  & 90.75             & \xmark                        \\
    Performer    & 80.85             & 30.77                          \\
    % \midrule
    ODE-RNN      & 65.9              & \xmark                        \\
    NRDE         & 89.8              & 16.49                          \\
    % \midrule
    ExpRNN       & 82.13             & 11.6                            \\
    LipschitzRNN & 88.38             & \xmark                        \\
    CKConv       & \textbf{95.3}     & 71.66              \\
    \bottomrule  \vspace*{.04pt}
  \end{tabular}\quad
  \begin{tabular}{@{}llllllllll@{}}
    \toprule
    \textsc{Model}     & \textsc{MFCC}     & \textsc{Raw}          \\
    \midrule
    WaveGAN-D    & \xmark            & 96.25           \\
    \midrule
    S4 (as in \cite{gu2022efficiently})  & 93.96 & \textbf{98.32}    \\
    S4 (\textbf{our run})  &  & 98.1    \vspace*{5pt}\\
    \textbf{\dsssoftmax (ours)}  &  & 97.7    \\
    \textbf{\dssexp (ours)}  &  & 98.2    \\
    \textbf{\dssexpnoscale (ours)}  &  & 97.7    \\
    \bottomrule    \vspace*{.1pt}
  \end{tabular}
  \caption{
    (\textbf{Speech Commands (SC)})
    Transformer, CTM, RNN, CNN, and SSM models.
    (\textit{MFCC}) Standard pre-processed MFCC features (length-161). %
    (\textit{Raw}) Unprocessed signals (length-16000).
    \xmark{} denotes not applicable or computationally infeasible on a single GPU. All results except ours are reproduced from \cite{gu2022efficiently}.}\label{table:speech-commands}
\end{table}

In all experiments and ablations, S4 and DSS use identical model hyperparameters such as hidden size, number of layers, etc.
Our experimental setup was built on top of the training framework provided by the S4 authors and for our S4 runs we followed their official instructions.\footnote{\url{https://github.com/HazyResearch/state-spaces}} 
Details about model initialization, and hyperparameters are provided in \S\ref{sec:experimental-setup}.

\subsection{Analyzing the Performance of DSS}\label{sec:performance-analysis}

While the experimental results presented above are encouraging, and clearly demonstrate the effectiveness of DSS at modeling long-range dependencies, it is not clear what exactly are the main factors contributing to its performance. To investigate this further, we performed an ablation analysis aimed at answering the following questions:
\begin{itemize}[leftmargin=*,topsep=0pt,itemsep=4pt,parsep=0pt]
    \item How significant is the Skew-Hippo initialization (\S\ref{sec:dss-init}) to the model performance? Would initializing $\Lambda$ randomly work just as well?
    \item Is the main source of superior performance of state space models (S4, DSS), compared to previous models, their ability to model long-range dependencies? Would restricting DSS to only model local interactions hurt its performance on the above tasks?
\end{itemize}

% \al{really like these ablations, good stuff}

\paragraph{Random Initialization} To answer the first question, we repeated the experiments after initializing each element of $\Lambda_\mathrm{re}$ and $\Lambda_\mathrm{im}$ in \dsssoftmax by randomly sampling from $\mathcal{N}(0,1)$.

\paragraph{Truncated Kernels} To answer the second question, instead of constructing a kernel of length equal to the length $L$ of the input, we restricted the length of the kernel constructed in \dsssoftmax (Algorithm \ref{alg:dss}) to $128$, significantly shorter than the length of the input. To understand the implication of this restriction recall Equation \ref{eqn:unroll-kernel} which states that $y_k \ = \ \sum_{j=0}^k \bar{K}_j\cdot u_{k-j}$.

For a given context size $c = 128$, restricting $\bar{K}_{\geq c} = 0$ would imply
$$y_k \ = \ \sum_{j=0}^{\min(k, c-1)} \bar{K}_j\cdot u_{k-j}$$
and hence the output $y_k$ at position $k$ would only depend on $u_k,\ldots,u_{k-c+1}$. This would restrict each \dsssoftmax layer to only model local interactions and the model would require several layers to have a broader receptive field and capture long-range interactions. 

As shown in Table \ref{table:lra-ablations}, randomly initializing the $\Lambda$ parameters of DSS leads to significant performance degradation on the majority of tasks, with the model failing to perform on \textsc{Path-X}. This is inline with the findings of \cite{gu2022efficiently} who also reported the initialization of S4 to be critical to its performance. Interestingly, despite this performance reduction, DSS manages to outperform all non state-space-based models on every task.

Truncating the length of the kernel also leads to a significant reduction in performance across most tasks (Table \ref{table:lra-ablations}), suggesting that the superior performance of state-space models on these tasks can indeed be partly attributed to their ability to capture long-range dependencies. Moreover, on some tasks such as \textsc{ListOps} and \textsc{Image}, using a truncated kernel still manages to outperform all Transformer variants, which is surprising as Transformer layers are known to be effective at capturing interactions at such short ranges.
% \al{hmm... I might be careful about this last sentence. many people view the strength of Transformers as capturing *global* interactions. Maybe you're viewing them as a "local model" because of inability of scaling to very long dependencies, but within medium size dependencies (certain longer than 128) they model very dense interactions.}

\begin{table*}[t]\setlength{\tabcolsep}{3pt} %% default is 6pt
    \footnotesize
    \centering
    \centering
    \begin{tabular}{@{}lccccccc@{}}
        \toprule
        \textsc{Model}    & \textsc{ListOps}  & \textsc{Text}  & \textsc{Image}    & \textsc{Pathfinder} & \textsc{Path-X}   & \textsc{SC}     \\
        % (modality)  & math expression  &  natural text     &  natural image &  synthetic image  & synthetic image & natural speech      \\ %
        \midrule
        Non-State-Space best &   49.5     &   78.7        &     47.4     &      77.8     & \xmark   &  96.5 \vspace{5pt}\\  
        % \midrule
        \dsssoftmax &  \textbf{60.6}    &  \textbf{84.8}   &    \textbf{85.7}  &  \textbf{84.6}     & \textbf{87.8}   &   \textbf{97.7}  \\
        \dsssoftmax (\textbf{random init}) &  57.6    &  80.54   &   72.13    &  77.5  &  \xmark  &  96.8 \\
        \dsssoftmax (\textbf{kernel length $128$}) &  51.6    &  75.41   &   83.8    & 65.1  &    \xmark    &  96.7  \\ 
        \midrule
        \dsssoftmax ($\mathrm{argmax}_{k<L}|K_k|$-95-percentile) &  89    &  124   &   66    & 87  &   1269    &  81  \\
    \bottomrule
    \end{tabular}
    \caption{\textit{(Top)} Ablation analysis of \dsssoftmax. ``Non-State-Space best'' is the best performance among models in top and middle sections of Table \ref{table:lra}. \textit{(Bottom)} ``$\mathrm{argmax}_{k < L}|K_k|$-95-percentile'' is the 95th percentile of $\{\mathrm{argmax}_{0 \leq k < L} |K_k| : K \text{ is a kernel in \dsssoftmax}\}$ and is explained in \S\ref{sec:kernel-analysis}.
    % \al{since "non-state-space best" is defined by prior work, you should include citations to where you're getting these numbers from. or at least point to the appendix here and present a "full version" of the table in appendix} 
    % \al{draw a line before the last row to indicate it's a different thing in the table; otherwise when looking at the table independently of the text it's very confusing to read. Consider adding a bare bones description in the caption as well if there is space} \jb{+1}
    }\label{table:lra-ablations}
\end{table*}

\subsection{Analysis of Learned DSS Parameters}\label{sec:kernel-analysis}

To further explore the inner workings of DSS, we visually inspected the trained parameters and kernels of \dsssoftmax.

The kernels of all layers of the trained \dsssoftmax are shown in Figure \ref{figure:kernel-heatmap} and reveal a stark contrast between the tasks. On the tasks \textsc{Image} and \textsc{SC}, for almost all kernels, the absolute values of the first $128$ positions are significantly higher than the later positions indicating that these kernels are mostly local. On the other hand for \textsc{Path-X}, for a significant proportion of kernels the opposite is true, indicating that these kernels are modeling long-range interactions. 

This partially explains why the performance of \dsssoftmax on \textsc{Image} and \textsc{SC} does not decrease significantly even after limiting the kernel length to $128$, whereas \textsc{Pathfinder} and \textsc{Path-X} report significant degradation (Table \ref{table:lra-ablations}). The last row of Table \ref{table:lra-ablations} shows the 95th percentile of the set $\{\mathrm{argmax}_{0 \leq k < L} |K_k| : K \text{ is a kernel in \dsssoftmax}\}$, i.e., the set of positions over all kernels, at which for a given kernel its absolute value is maximized. As expected, most kernels of \textsc{Image} and \textsc{SC} are local whereas for \textsc{Path-X} they are mostly long range.

The values of the real and imaginary parts of $\Lambda$ are shown in Figure \ref{figure:lambda} (and \S\ref{sec:learning-dss}, Figure  \ref{figure:lambda-real}). We observe that, although all elements of $\Lambda_\mathrm{re}$ are initialized as $-0.5$, their values can change significantly during training and can become positive (see plots for \textsc{ListOps}, \textsc{SC}). It can also be observed that a $\lambda$ with a more negative $\mathrm{Re}(\lambda)$ generally tends to have a larger $\mathrm{Im}(\lambda)$, which interestingly is a property not satisfied by Skew-Hippo initialization.

The values of the trained $\Delta_{\log}$ corresponding to the real and imaginary parts of $\Lambda$ are shown in \S\ref{sec:learning-dss} (Figure \ref{figure:delta-real}) and, in this case as well, change significantly during training. The values of $\Delta_{\log}$ on short-range tasks such as \textsc{Image} and \textsc{SC} generally tend to be larger compared to long-range tasks such as \textsc{Pathfinder} and \textsc{Path-X}, inline with the intuition that, when $\mathrm{Re}(\lambda) < 0$, larger values of $\Delta$ correspond to modeling long-range dependence (\S\ref{sec:rnn}).

We note that the plot for \textsc{ListOps} reveals an outlier with a value of $22$ which after exponentiation in Algorithm \ref{alg:dss} would result in an extreme large $\Delta$. This can potentially lead to training instabilities and we plan to address this issue in future work.

\begin{figure}[h]
    \centering
    \hspace*{-0.15in}
    \includegraphics[scale=0.45]{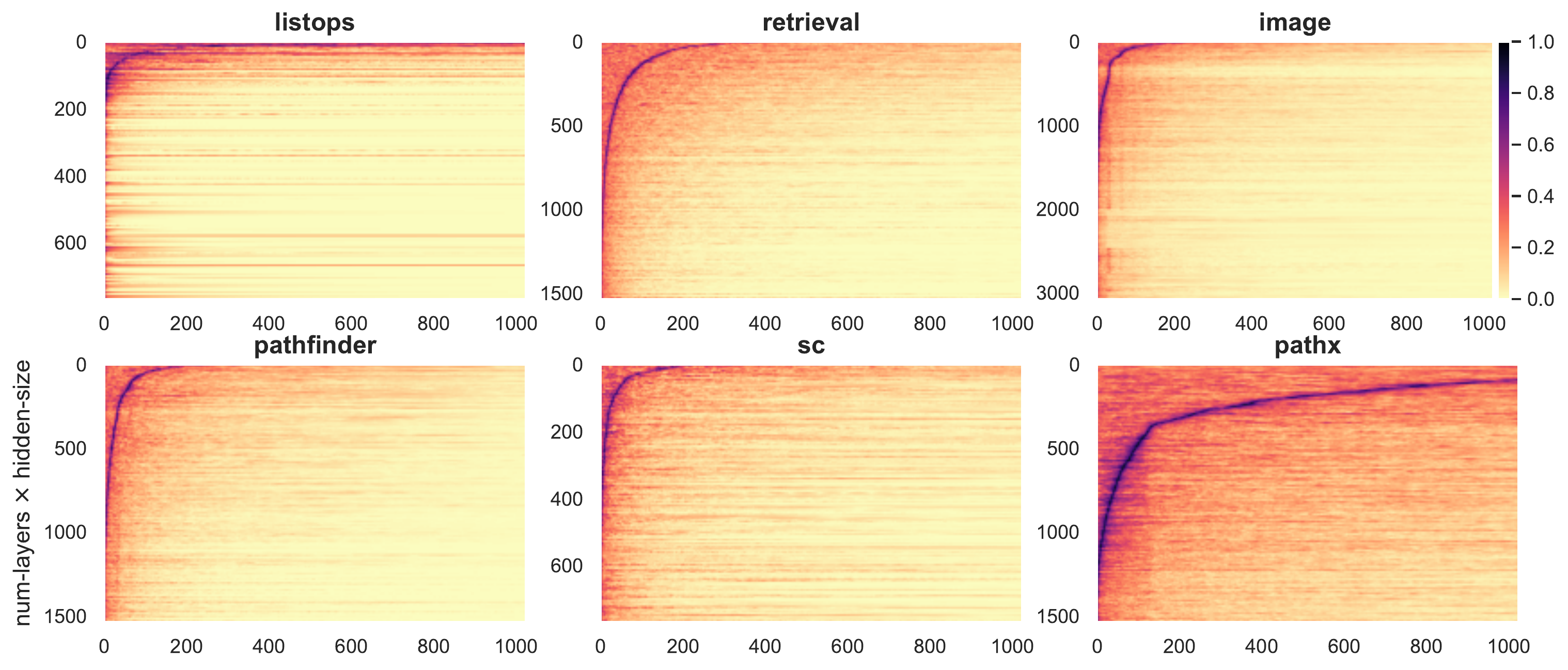}
    \caption{Kernels of trained \dsssoftmax. Each row of pixels corresponds to one of the ($\text{number-of-layers} \times H$) kernels. For a row (kernel) $K \in \R^L$ the element $K_k \in \R$ is shown as $|K_k| / ||K||_{\infty}$ to enable visualization. To enable comparison across tasks, only the first $1024$ positions are shown. %\vspace*{-10pt}
    }
\label{figure:kernel-heatmap}
\end{figure}
\begin{figure}[h]
    \centering
    \hspace*{-0.15in}
    \includegraphics[scale=0.38]{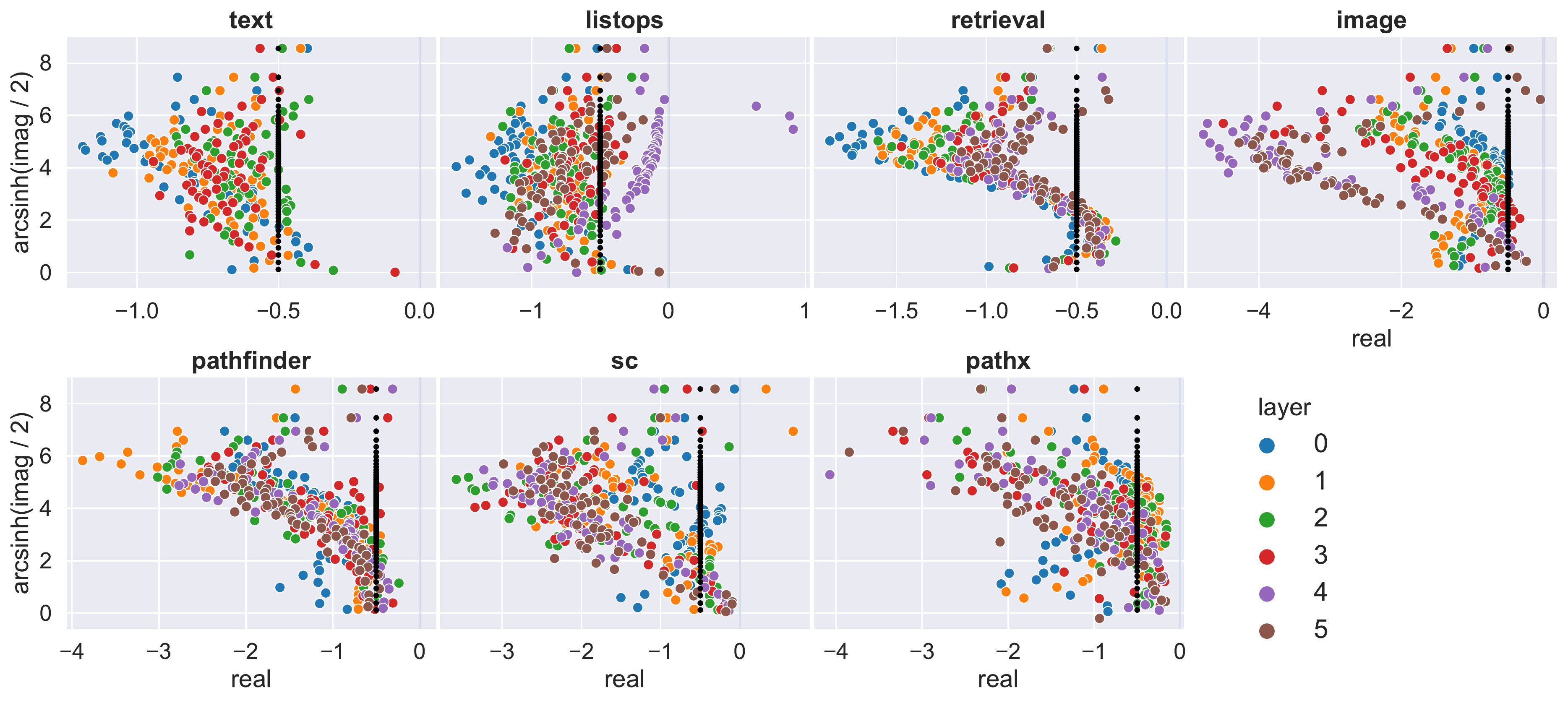}
    \caption{Values of $\Lambda$ in trained \dsssoftmax for tasks described in \S\ref{sec:experiments}. For $\lambda = x + iy$, we show $y$ on log-scale for better visualization by plotting $\lambda$ as $(x,\mathrm{arcsinh}(y/2))$ $=$ $(x, \log(y/2 + \sqrt{(y/2)^2 + 1}))$. Black dots correspond to Skew-Hippo initialization (\S\ref{sec:dss-init}).}
\label{figure:lambda}
\end{figure}

\section{Discussion}\label{sec:discussion}
\paragraph{Related work} 
%\noindent {\bf Related work}\ \ 
In a long line of work, several variants of the Transformer have been proposed to address its quadratic complexity (cf. \cite{gupta-berant-2021-value} and references therein). Recently, Gu et al. \cite{gu2022efficiently} introduced S4, a new type of model that leverages linear state spaces for contextualization instead of attention. Our work is inspired from S4 but uses a diagonal parameterization of state spaces. As a result, our method is significantly simpler compared to S4 and we do not require (1) Pad\'{e} approximations to $\bar{A} = e^{A\Delta}$ (Euler, Bilinear, etc), (2) Woodbury Identity reductions to compute matrix inverse, and (3) fourier analysis for computing the SSM kernel efficiently via truncated generating functions.

\paragraph{Limitations and future work} 
%\noindent {\bf Limitations and future work}\ \ 
In this work, we evaluated DSS on sequence-level classification tasks. In future work, we plan to include token-level generation tasks such as language modeling, forecasting, etc. Another natural and important future direction is to pretrain models based on DSS over large amounts of raw data. Lastly, we found that the initialization and learning rates of DSS parameters $\Lambda_\mathrm{re}$, $\Lambda_\mathrm{im}$, $\Delta_{\log}$ (\S\ref{sec:dss-init}) play an important role in the performance and convergence of the model. Informally, for tasks such as $\textsc{Path-X}$ that require very long-range interactions, a smaller initialization of $\Delta_{\log}$ was beneficial. An in-depth analysis of this phenomenon could be helpful and remains for future work.

% \paragraph{Conclusion} In this work, we presented DSS, a simple and effective state-space based method for modeling long-range dependencies over text, images and audio.  

% \ag{anonymize, mention that we use the same configs, locality: natural vs synthetic}

\begin{ack}
We thank Ramon Fernandez Astudillo for carefully reviewing the preliminary draft and suggesting several helpful edits. We thank Omer Levy, Achille Fokoue and Luis Lastras for their support. Our experiments were conducted on IBM's Cognitive Computing Cluster, with additional resources from Tel Aviv University. This research was supported by (1) IBM AI Residency program and (2) Defense Advanced Research Projects Agency (DARPA) through Cooperative Agreement D20AC00004 awarded by the U.S. Department of the Interior (DOI), Interior Business Center.
\end{ack}

{\small
\bibliographystyle{alpha}
\bibliography{all}
}

\comment{
%%%%%%%%%%%%%%%%%%%%%%%%%%%%%%%%%%%%%%%%%%%%%%%%%%%%%%%%%%%%
\section*{Checklist}

\comment{
%%% BEGIN INSTRUCTIONS %%%
The checklist follows the references.  Please
read the checklist guidelines carefully for information on how to answer these
questions.  For each question, change the default \answerTODO{} to \answerYes{},
\answerNo{}, or \answerNA{}.  You are strongly encouraged to include a {\bf
justification to your answer}, either by referencing the appropriate section of
your paper or providing a brief inline description.  For example:
\begin{itemize}
  \item Did you include the license to the code and datasets? \answerYes{See Section~\ref{gen_inst}.}
  \item Did you include the license to the code and datasets? \answerNo{The code and the data are proprietary.}
  \item Did you include the license to the code and datasets? \answerNA{}
\end{itemize}
Please do not modify the questions and only use the provided macros for your
answers.  Note that the Checklist section does not count towards the page
limit.  In your paper, please delete this instructions block and only keep the
Checklist section heading above along with the questions/answers below.
%%% END INSTRUCTIONS %%%
}

\begin{enumerate}

\item For all authors...
\begin{enumerate}
  \item Do the main claims made in the abstract and introduction accurately reflect the paper's contributions and scope?
    \answerYes{}
  \item Did you describe the limitations of your work?
    \answerYes{See \S\ref{sec:discussion}.}
  \item Did you discuss any potential negative societal impacts of your work?
    \answerNo{}
  \item Have you read the ethics review guidelines and ensured that your paper conforms to them?
    \answerYes{}
\end{enumerate}

\item If you are including theoretical results...
\begin{enumerate}
  \item Did you state the full set of assumptions of all theoretical results?
    \answerYes{}
        \item Did you include complete proofs of all theoretical results?
    \answerYes{See \S\ref{app:diagonal}.}
\end{enumerate}

\item If you ran experiments...
\begin{enumerate}
  \item Did you include the code, data, and instructions needed to reproduce the main experimental results (either in the supplemental material or as a URL)?
    \answerYes{See \S\ref{sec:intro}, \S\ref{sec:experimental-setup}.}
  \item Did you specify all the training details (e.g., data splits, hyperparameters, how they were chosen)?
    \answerYes{See  \S\ref{sec:experimental-setup}.}
        \item Did you report error bars (e.g., with respect to the random seed after running experiments multiple times)?
    \answerNo{}
        \item Did you include the total amount of compute and the type of resources used (e.g., type of GPUs, internal cluster, or cloud provider)?
    \answerYes{See \S\ref{sec:experimental-setup}.}
\end{enumerate}

\item If you are using existing assets (e.g., code, data, models) or curating/releasing new assets...
\begin{enumerate}
  \item If your work uses existing assets, did you cite the creators?
    \answerYes{}
  \item Did you mention the license of the assets?
    \answerNo{}
  \item Did you include any new assets either in the supplemental material or as a URL?
    \answerNA{}
  \item Did you discuss whether and how consent was obtained from people whose data you're using/curating?
    \answerNA{}
  \item Did you discuss whether the data you are using/curating contains personally identifiable information or offensive content?
    \answerNA{}
\end{enumerate}

\item If you used crowdsourcing or conducted research with human subjects...
\begin{enumerate}
  \item Did you include the full text of instructions given to participants and screenshots, if applicable?
    \answerNA{}
  \item Did you describe any potential participant risks, with links to Institutional Review Board (IRB) approvals, if applicable?
    \answerNA{}
  \item Did you include the estimated hourly wage paid to participants and the total amount spent on participant compensation?
    \answerNA{}
\end{enumerate}

\end{enumerate}
}

%  ---------- appendix -----------
\newpage
\appendix

\section{Supplemental Material}\label{sec:supplemental}

\subsection{Diagonal State Spaces}\label{app:diagonal}

We restate Proposition \ref{prop:diagonal} for convenience.

\begin{proposition*} Let $K \in \R^{1\times L}$ be the kernel of length $L$ of a given state space $(A, B, C)$ and sample time $\Delta > 0$, where $A \in \C^{N \times N}$ is diagonalizable over $\C$ with eigenvalues $\lambda_1,\ldots,\lambda_N$ and $\forall i$, $\lambda_i \neq 0$ and $e^{L\lambda_i\Delta} \neq 1$. Let $P \in \C^{N \times L}$ be $P_{i,k} = \lambda_i k\Delta$ and $\Lambda$ be the diagonal matrix with $\lambda_1,\ldots,\lambda_N$. Then there exist $\widetilde{w}, w \in \C^{1\times N}$ such that
\begin{itemize}
    \item[(a)] $K\ \ =\ \ \bar{K}_{\Delta, L}(\Lambda,\ (1)_{1 \leq i \leq N},\ \widetilde{w})\ \ =\ \ \widetilde{w} \cdot \Lambda^{-1} (e^{\Lambda\Delta} - I) \cdot \elemexp(P)$,
    \item[(b)] $K\ \ =\ \ \bar{K}_{\Delta, L}(\Lambda,\ ((e^{L\lambda_i\Delta} - 1)^{-1})_{1\leq i \leq N},\ w)\ \ =\ \ w \cdot \Lambda^{-1} \cdot \rowsoftmax(P)$.
    % \item[(c)] $K$\ \ =\ \ $\bar{K}_{\Delta, L}(\Lambda,\ (\lambda_i(e^{\lambda_i\Delta} - 1)^{-1})_{1 \leq i \leq N},\ \widehat{w})$\ \ =\ \ $\widehat{w} \cdot \mathrm{elementwise}\text{-}\mathrm{exp}(P)$.
    % \begin{equation}
    % \hspace{-20pt}\textit{(a)}\ \ \ K\ \ =\ \ \bar{K}_{\Delta, L}(\Lambda,\ (1)_{1 \leq i \leq N},\ \widetilde{w})\ \ =\ \ \widetilde{w} \cdot \Lambda^{-1} (e^{\Lambda\Delta} - I) \cdot \mathrm{elementwise}\text{-}\mathrm{exp}(P)\ \ 
    % \end{equation}
\end{itemize}
\end{proposition*}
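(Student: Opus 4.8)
The plan is to reduce everything to a change of basis that diagonalizes $A$, after which both identities become termwise comparisons of geometric-type sums. First I would write $A = V\Lambda V^{-1}$ (possible since $A$ is diagonalizable over $\C$) and substitute into the closed form of the kernel from Equation \ref{eqn:kernel}, namely $K_k = C e^{Ak\Delta}(e^{A\Delta}-I)A^{-1}B$. Since $e^{Ak\Delta} = V e^{\Lambda k\Delta} V^{-1}$, $e^{A\Delta}-I = V(e^{\Lambda\Delta}-I)V^{-1}$, and $A^{-1} = V\Lambda^{-1}V^{-1}$, the interior $V,V^{-1}$ pairs cancel and I obtain $K_k = \widetilde{C}\, e^{\Lambda k\Delta}(e^{\Lambda\Delta}-I)\Lambda^{-1}\widetilde{B}$ with $\widetilde{C} := CV \in \C^{1\times N}$ and $\widetilde{B} := V^{-1}B \in \C^{N\times 1}$. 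Because every factor is diagonal, this is just $K_k = \sum_{i=1}^N \widetilde{C}_i\widetilde{B}_i\,\lambda_i^{-1}(e^{\lambda_i\Delta}-1)\,e^{\lambda_i k\Delta}$, which already has the shape of both target expressions; the hypothesis $\lambda_i \neq 0$ is exactly what makes $\Lambda^{-1}$ exist.

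For part (a), I would set $\widetilde{w}_i := \widetilde{C}_i\widetilde{B}_i$ and observe that the diagonal state space $(\Lambda, (1)_{1\leq i\leq N}, \widetilde{w})$ has, by the very same kernel formula, $k$-th entry $\sum_i \widetilde{w}_i \lambda_i^{-1}(e^{\lambda_i\Delta}-1)e^{\lambda_i k\Delta}$, matching $K_k$ exactly. The right-hand matrix form $\widetilde{w}\cdot\Lambda^{-1}(e^{\Lambda\Delta}-I)\cdot\elemexp(P)$ is then just a rewriting of this sum, using $\elemexp(P)_{i,k} = e^{\lambda_i k\Delta}$ since $P_{i,k} = \lambda_i k\Delta$.

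For part (b), the extra ingredient is the softmax normalization. I would compute the denominator of each row of $\rowsoftmax(P)$ as a finite geometric series: the $i$-th row of $P$ is $(0,\lambda_i\Delta,\ldots,(L-1)\lambda_i\Delta)$, so $\sum_{j=0}^{L-1} e^{\lambda_i j\Delta} = (e^{L\lambda_i\Delta}-1)/(e^{\lambda_i\Delta}-1)$, valid because $e^{L\lambda_i\Delta}\neq 1$ forces $e^{\lambda_i\Delta}\neq 1$. Hence $\rowsoftmax(P)_{i,k} = e^{\lambda_i k\Delta}(e^{\lambda_i\Delta}-1)/(e^{L\lambda_i\Delta}-1)$. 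Setting $w_i := \widetilde{w}_i(e^{L\lambda_i\Delta}-1)$, which is well-defined precisely by the assumption $e^{L\lambda_i\Delta}\neq 1$, the factor $(e^{\lambda_i\Delta}-1)/(e^{L\lambda_i\Delta}-1)$ cancels and $w\cdot\Lambda^{-1}\cdot\rowsoftmax(P)$ reproduces $K_k$. I would then verify the first equality of (b) by substituting the vector with entries $(e^{L\lambda_i\Delta}-1)^{-1}$ and this $w$ into the kernel formula and confirming the $(e^{L\lambda_i\Delta}-1)$ factors cancel.

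The computation is elementary throughout; the only real content is the diagonalization step, which absorbs $B$ and $C$ into the scalar weights $\widetilde{w}_i = \widetilde{C}_i\widetilde{B}_i$ and turns matrix powers into scalar exponentials. The step most worth stating carefully is the geometric-series identity in part (b), since it is what forces the hypothesis $e^{L\lambda_i\Delta}\neq 1$ — both to sum the series and to define $w$ — and I would make sure to note that this same hypothesis implies $e^{\lambda_i\Delta}\neq 1$, so no division by zero occurs anywhere.
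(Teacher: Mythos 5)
Your proposal is correct and follows essentially the same route as the paper's proof: diagonalize $A = V\Lambda V^{-1}$ to collapse the kernel into the scalar sum $K_k = \sum_i \widetilde{w}_i\lambda_i^{-1}(e^{\lambda_i\Delta}-1)e^{\lambda_i k\Delta}$ with $\widetilde{w}_i = (CV)_i(V^{-1}B)_i$, then for part (b) use the geometric-series identity $(z^L-1)=(z-1)(z^0+\cdots+z^{L-1})$ to identify the softmax normalization and set $w_i = \widetilde{w}_i(e^{L\lambda_i\Delta}-1)$. Your explicit remark that $e^{L\lambda_i\Delta}\neq 1$ both guarantees a nonvanishing softmax denominator and makes $w$ well-defined matches the role this hypothesis plays in the paper's argument.
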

\begin{proof}
Let $A$ be diagonalizable over $\C$ as $A = V \Lambda V^{-1}$ with eigenvalues $\lambda_1,\ldots,\lambda_N \in \C$. From Equation \ref{eqn:kernel} we have 
\begin{align*}
K \ \ = \ \ (\ C e^{A\cdot k\Delta} (e^{A\Delta} - I)A^{-1}B\ )_{0 \leq k < L},
\end{align*}
where
\begin{align*}
K_k \ \ = \ \  C e^{A\cdot k\Delta} (e^{A\Delta} - I)A^{-1}B  \ \ = \ \  
(CV) e^{\Lambda k\Delta}(e^{\Lambda\Delta} - I)\Lambda^{-1} (V^{-1}B)\ .
\end{align*}

For $CV \in \C^{1 \times N}$ and $V^{-1}B \in \C^{N \times 1}$ let $(CV)^\top * (V^{-1}B) = \widetilde{w} \in \C^N$ be the element-wise product of $CV$ and $V^{-1}B$. Then, 
\begin{align}
K_k &= \sum_{i=1}^N {e^{\lambda_i k\Delta}(e^{\lambda_i\Delta} - 1) \over \lambda_i} \cdot \widetilde{w}_i  \label{eqn:w-tilde}\\
&= \sum_{i=1}^N {e^{\lambda_i k\Delta}(e^{\lambda_i\Delta} - 1) \over \lambda_i(e^{L\lambda_i\Delta} - 1)} \cdot ((e^{L\lambda_i\Delta} - 1) \widetilde{w}_i) \\
&= \sum_{i=1}^N (\widetilde{w}_i \cdot (e^{L\lambda_i\Delta} - 1))\cdot \frac{1}{\lambda_i} \cdot {e^{\lambda_i k\Delta} \over (\sum_{r=0}^{L-1} e^{r\lambda_i\Delta})} \label{eqn:w-tilde-sum}
\end{align}

% \end{split}    
% \end{equation}
where the last equality follows from $(z^L-1) = (z-1)(z^0+\ldots+z^{L-1})$ and using $z^L \neq 1$.

Let $P \in \C^{N \times L}$ be the matrix $P_{i,k} = \lambda_i\cdot k\Delta$ and let $E = \elemexp(P)$. It is easy to verify that Equation \ref{eqn:w-tilde} can be re-written as a vector-matrix product as 
$$K\ \ =\ \ \widetilde{w} \cdot \Lambda^{-1} (e^{\Lambda\Delta} - I) \cdot E \ .$$
Similarly, for the state space $(\Lambda,\ (1)_{1 \leq i \leq N},\ \widetilde{w}))$ and sample time $\Delta$ its kernel $\widetilde{K}$ can be obtained from Equation \ref{eqn:kernel} as
\begin{align*}
\widetilde{K}_k \ \ &= \ \ \widetilde{w}\cdot e^{\Lambda\cdot k\Delta} (e^{\Lambda\Delta} - I)\Lambda^{-1} \cdot [1,\ldots,1]_{N \times 1} \\
& = \sum_{i=1}^N \widetilde{w}_i \cdot {e^{\lambda_i k\Delta}(e^{\lambda_i\Delta} - 1) \over \lambda_i}
\end{align*}
which is also the expression for $K_k$ (Equation \ref{eqn:w-tilde}). This proves part (a) and we now consider part (b). Let $w \in \C^N$ be defined as 
$$w_i = \widetilde{w}_i \cdot (e^{L\lambda_i\Delta} - 1).$$
Then from Equation \ref{eqn:w-tilde-sum},
\begin{align}
K_k &= \sum_{i=1}^N w \cdot \frac{1}{\lambda_i} \cdot {e^{\lambda_i k\Delta} \over (\sum_{r=0}^{L-1} e^{r\lambda_i\Delta})}\ . \label{eqn:k-softmax}
\end{align}
Let $S = \rowsoftmax(P)$ denote the matrix obtained after applying $\mathrm{softmax}$ on the rows of $P$, i.e.
$$S_{i,k} = {e^{\lambda_i k\Delta} \over \sum_{r=0}^{L-1} e^{r\lambda_i\Delta}} .$$
It is easy to verify that Equation \ref{eqn:k-softmax} can be expressed as a vector-matrix product
$$K = w \cdot \Lambda^{-1} \cdot S\ .$$

Similarly, for the state space $(\Lambda,\ ((e^{L\lambda_i\Delta} - 1)^{-1})_{1 \leq i \leq N},\ w))$ and sample time $\Delta$ its kernel $\widehat{K}$ can be obtained from Equation \ref{eqn:kernel} as
\begin{align*}
\widehat{K}_k \ \ &= \ \ w\cdot e^{\Lambda\cdot k\Delta} (e^{\Lambda\Delta} - I)\Lambda^{-1} \cdot [\ldots, (e^{L\lambda_i\Delta} - 1)^{-1} ,\ldots]_{N \times 1} \\
& = \sum_{i=1}^N  w_i \cdot {e^{\lambda_i k\Delta}(e^{\lambda_i\Delta} - 1) \over \lambda_i (e^{L\lambda_i\Delta} - 1)} \\
& = \sum_{i=1}^N \widetilde{w}_i \cdot {e^{\lambda_i k\Delta}(e^{\lambda_i\Delta} - 1) \over \lambda_i}
\end{align*}
which is also the expression for $K_k$ (Equation \ref{eqn:w-tilde}).
\end{proof}

\subsection{Numerically Stable $\mathrm{softmax}$}\label{app:softmax}

As noted in \S\ref{sec:dss}, $\mathrm{softmax}$ can have singularities over $\C$. To address this issue, we use a simple correction to make it well-defined over the entire domain:
\begin{itemize}[leftmargin=*]
    \item $\mathrm{softmax}$ : Given $(x_0,\ldots,x_{L-1}) = x \in \C^L$, let $\mathrm{softmax}(x) \in \C^L$ be defined as $(\mathrm{softmax}(x))_k = e^{x_k} (e^{x_0}+\ldots+ e^{x_{L-1}})^{-1}.$ Note that for any $c \in \C$, $\mathrm{softmax}(x_0,\ldots,x_{L-1})$ $=$ $\mathrm{softmax}(x_0-c,\ldots,x_{L-1}-c)$. Unlike over $\R$, $\mathrm{softmax}$ can have singularities over $\C$ as sum of exponentials can vanish. E.g. $e^{0} + e^{i\pi} = 0$ and hence $\mathrm{softmax}(0,i\pi)$ is not defined.
    
    \item $\mathrm{max}$ : Given $(x_0,\ldots,x_{L-1}) = x \in \C^L$, let $\mathrm{max}(x)$ be the $x_i$ with the maximum real part, i.e. $x_{\mathrm{argmax}_i \mathrm{Re}(x_i)}$.
    
    \item $\mathrm{reciprocal}_\epsilon$ : Given $x \in \C$ and $\epsilon \in \R_{> 0}$, let $\mathrm{reciprocal}_\epsilon(x) = \frac{\overline{x}}{x\cdot \overline{x} + \epsilon}$ where $\overline{x}$ is the complex conjugate of $x$. The denominator is always in $\R_{\geq \epsilon}$ and $|\mathrm{reciprocal}_\epsilon| \leq (2\sqrt{\epsilon})^{-1}$.
    
    \item $\mathrm{softmax}_\epsilon$ : Given $(x_0,\ldots,x_{L-1}) = x \in \C^L$ let $m = \mathrm{max}(x)$ and $\widetilde{x}_i = x_i - m$. Note that $|e^{\widetilde{x}_i}| \leq 1$. Given $\epsilon \in \R_{> 0}$, let $\mathrm{softmax}_\epsilon(x) \in \C^L$ be $$(\mathrm{softmax}_\epsilon(x))_k = e^{\widetilde{x}_k}\cdot\mathrm{reciprocal}_\epsilon\left(\sum_{r=0}^{L-1}  e^{\widetilde{x}_r}\right).$$
    $\mathrm{softmax}_\epsilon$ is always bounded and differentiable.
\end{itemize}

In our implementation, we use $\mathrm{softmax}_\epsilon$ with $\epsilon = 10^{-7}$.

\paragraph{SSM Softmax} In our current implementation of $\mathrm{softmax}(x)$ (Figure \ref{app:torch}), we exploit the specific structure of $x$ that arises in Algorithm \ref{alg:dss}. We now describe an alternate method based on FFT which also uses this specific structure and might lead to a faster implementation in the future.

\begin{claim}[SSM Softmax]\label{claim:ssm-softmax} Given $c \in \C$, let $p = I[\mathrm{Re}(c) > 0]$, $n = 1-p$, $e = \exp(c\cdot (n - p))$ and $r = (n - pe) / (p-ne)$. Let $\omega = \exp(-2\pi i/ L)$ where $i = \sqrt{-1}$.
Then, 
\begin{equation*}
\begin{split}
\softmax(c\cdot0, \ldots, c\cdot(L-1)) &= \mathrm{inverseFFT}\left( {1-e \over n - pe + (p-ne)\omega^k} \right)_{0 \leq k < L} \\  
&=\mathrm{inverseFFT}\left( {r + 1 \over r+\omega^k} \right)_{0 \leq k < L}.  
\end{split}  
\end{equation*}
\end{claim}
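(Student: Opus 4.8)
The plan is to verify the claim by computing the forward discrete Fourier transform of the softmax vector directly and checking that it matches the stated expressions; since $\mathrm{FFT}$ and $\mathrm{inverseFFT}$ are mutually inverse, this is equivalent to the asserted identity. Write $S = \softmax(c\cdot 0, \ldots, c\cdot(L-1))$, so that $S_j = (e^c)^j / Z$ with $Z = \sum_{r=0}^{L-1}(e^c)^r$. Throughout I would assume $e^{cL}\neq 1$, which is precisely the condition guaranteeing that the softmax denominator $Z$ is nonzero (it is the standing assumption $e^{L\lambda_i\Delta}\neq 1$ translated to $c=\lambda_i\Delta$) and which, as noted below, also makes every denominator on the right-hand side nonzero.

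First I would reduce the two right-hand expressions to a single closed form. Using only $n=1-p$ (so $p+n=1$), a one-line manipulation of $r=(n-pe)/(p-ne)$ gives $r+1=(1-e)/(p-ne)$ and $r+\omega^k=(n-pe+(p-ne)\omega^k)/(p-ne)$, whence $\frac{r+1}{r+\omega^k}$ and $\frac{1-e}{n-pe+(p-ne)\omega^k}$ are literally equal. So it suffices to analyze one of them.

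Next I would compute $\mathrm{FFT}(S)$. With the convention $\mathrm{FFT}(a)_k=\sum_{j=0}^{L-1}a_j\omega^{jk}$ and $\omega=e^{-2\pi i/L}$, summing the geometric series gives
\[
\mathrm{FFT}(S)_k \;=\; \frac{1}{Z}\sum_{j=0}^{L-1}(e^c\omega^k)^j \;=\; \frac{1}{Z}\cdot\frac{(e^c\omega^k)^L-1}{e^c\omega^k-1}.
\]
The key simplification is $(\omega^k)^L=e^{-2\pi i k}=1$, so $(e^c\omega^k)^L=e^{cL}$; combined with $Z=(e^{cL}-1)/(e^c-1)$ the factor $e^{cL}-1$ cancels and I obtain
\[
\mathrm{FFT}(S)_k \;=\; \frac{e^c-1}{e^c\omega^k-1}.
\]
Here $e^{cL}\neq 1$ is used both to ensure $e^c\omega^k\neq 1$ for every $k$ (since $(e^c\omega^k)^L=e^{cL}$) and to make the geometric-series formula valid.

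Finally I would match this closed form to the stated sequence by splitting on the sign of $\mathrm{Re}(c)$, i.e. on $p\in\{0,1\}$. When $\mathrm{Re}(c)\le 0$ we have $p=0,n=1,e=e^c$, and the second form becomes $\frac{1-e^c}{1-e^c\omega^k}=\frac{e^c-1}{e^c\omega^k-1}$; when $\mathrm{Re}(c)>0$ we have $p=1,n=0,e=e^{-c}$, giving $\frac{1-e^{-c}}{\omega^k-e^{-c}}$, which equals $\frac{e^c-1}{e^c\omega^k-1}$ after multiplying numerator and denominator by $e^c$. In both cases $\mathrm{FFT}(S)_k$ equals the stated sequence, so applying $\mathrm{inverseFFT}$ to that sequence recovers $S$, which is the claim. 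I expect the only delicate points to be bookkeeping around the FFT normalization and sign conventions and the identity $(e^c\omega^k)^L=e^{cL}$ that drives the cancellation; the case split merely records that $e=\exp(c(n-p))$ is chosen with $|e|\le 1$ for numerical stability and does not affect the underlying identity.
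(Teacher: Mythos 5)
Your proof is correct and follows essentially the same route as the paper's: both verify the identity by evaluating the generating polynomial of the softmax vector at the $L$-th roots of unity (i.e., taking its forward DFT), summing the geometric series, and using $\omega^L=1$ to cancel the $e^{cL}-1$ factors. The only difference is organizational --- you derive the single closed form $(e^c-1)/(e^c\omega^k-1)$ once and defer the $p\in\{0,1\}$ split to a trivial final matching, whereas the paper constructs a separate polynomial in each case (with reversed coefficients $z^{L-1-k}$ when $\mathrm{Re}(c)>0$); the mathematical content is identical.
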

\begin{proof} There are 2 cases depending on sign of $\mathrm{Re}(c)$.

\noindent {\bf Case 1} ($p=0, n=1$): In this case we have $e = \exp(c)$. For the map 
\begin{align*}
F(z) = {1 - e \over 1 - e^L}\sum_{k=0}^{L-1} (ez)^k = {(1-e)(1 - (ez)^L) \over (1 - e^L)(1 - ez)}
\end{align*}
we get the coefficients of $F(z)$ as
\begin{align*}
\mathrm{invFFT}(F(\omega^k)_{0 \leq k < L}) \ \ =\ \ \left({ (1-e) \over (1-e^L)}e^k\right)_{0 \leq k < L} \ \ =\ \ \softmax(c\cdot0, \ldots, c\cdot(L-1)). 
\end{align*}
We have, 
$$F(\omega^k) \ \ =\ \ {(1-e)(1 - (e\omega^k)^L) \over (1 - e^L)(1 - e\cdot \omega^k)}\ \ =\ \ {1-e \over 1 - e\cdot \omega^k}$$
where last equality follows from $\omega^L = 1$.\\

\noindent {\bf Case 2} ($p=1, n=0$): In this case we have $e = \exp(-c)$. For the map 
\begin{align*}
F(z) \ \ &=\ \ {1 - e \over 1 - e^L}\sum_{k=0}^{L-1} e^kz^{L-1-k} \ \ =\ \ {(1 - e) z^{L-1} \over 1 - e^L}\sum_{k=0}^{L-1} \left({e \over z}\right)^k \\
     &= \ \ {(1 - e) z^{L-1} \over 1 - e^L}{ 1-\left({e \over z}\right)^L \over 1 - {e \over z}} \ \ = \ \ {(1 - e) \over (1 - e^L)}{ (z^L - e^L) \over (z-e)}
\end{align*}
we get the coefficients of $F(z)$ as
\begin{align*}
&\mathrm{invFFT}(F(\omega^k)_{0 \leq k < L})\ \ =\ \ \left({ (1-e) \over 1-e^L}e^{L-1-k}\right)_k\\ 
&=\ \ \left({ (e^{-1}-1) \over (e^{-1})^L-1}(e^{-1})^k\right)_{0 \leq k < L} 
\ \ =\ \ \softmax(c\cdot0, \ldots, c\cdot(L-1))
\end{align*}
as $e^{-1} = \exp(c)$. Moreover, we have 
$$F(\omega^k)\ \ =\ \ {(1-e)(\omega^{k\cdot L} - e^L) \over (1 - e^L)(\omega^k - e )} \ \ =\ \ {1-e \over -e + \omega^k}$$
where last equality follows from $\omega^L = 1$.

Finally, the second equality of the main Claim follows from $1-e = n-pe + p-ne$.
\end{proof}

The computation of $\softmax$ in Claim \ref{claim:ssm-softmax} is numerically stable and we always exponentiate scalars with a negative real part. The computed function has singularities at $c \in \{-2\pi ik/ L, 0 \leq k < L\}$.

\subsection{Experimental Setup}\label{sec:experimental-setup}

We now describe the training details for DSS and S4 on LRA and Speech Commands (\S\ref{sec:experiments}). 

\textit{Sequence Classification Head:} Both LRA and Speech Commands are sequence classification tasks. The final layer of the DSS stack outputs a sequence which is aggregated into a single vector via mean pooling along the length dimension. Exceptions to this were \textsc{Text} and \textsc{Pathfinder} tasks where the rightmost token was used as the aggregate.  

For all datasets, we used AdamW optimizer with a constant learning rate schedule with decay on validation plateau. However, for the DSS parameters (\S\ref{sec:dss-layer}) initial learning rate was $10^{-3}$ and weight decay was not used, with a few exceptions noted below.

We used hyperparameters such as model sizes, number of update steps, etc as recommended by the S4 authors on their official repository and are listed in Table \ref{tab:lra-hyperparams}. We made the following exceptions for DSS trainings:
\begin{itemize}[leftmargin=*,topsep=0pt,itemsep=4pt,parsep=0pt]
\item \textsc{ListOps}: learning rate of $\Delta_{\log}$ was $0.02$ instead of $10^{-3}$.
\item \textsc{Text}: learning rate of $\Delta_{\log}$ $0.02$ instead of $10^{-3}$.
\item \textsc{Image}: we used seed $0$ and trained for $200$ epochs instead of $100$.
\item \textsc{Pathfinder}: we used Patience $=13$.
\item \textsc{Path-X}: we used batch size $16$ and trained for $35$ epochs. $\Delta_{\log}$ was initialized as $e^r$ where $r \sim \mathcal{U}(\log(.0001), \log(.01))$ and its learning rate was $10^{-4}$. This was beneficial in early convergence of the model.
\end{itemize}

For our experiments, the test accuracy that we report in \S\ref{sec:experiments} was measured at the checkpoint with the highest validation accuracy.

All our experiments were conducted on a single A100 GPU (40GiB). 

\begin{table*}[h]
  \centering
  \resizebox{\textwidth}{!}{%
    \begin{tabular}{@{}llllllllllll@{}}
      \toprule
                                      & \textbf{Depth} & \textbf{Features \( H \)} & \textbf{Norm} & \textbf{Pre-norm} & {\bf Dropout} & {\bf LR} & {\bf Batch Size} & {\bf Epochs} & \textbf{WD} & \textbf{Patience} \\
      \midrule
      \textbf{ListOps}                & 6              & 128                       & BN            & False             & 0             & 0.01     & 50              & 50           & 0.01        & 5                 \\
      \textbf{Text}                   & 4              & 128                        & BN            & True              & 0             & 0.01    & 50               & 40           & 0           & 10                 \\
      \textbf{Retrieval}              & 6              & 256                       & BN            & True              & 0             & 0.002    & 64               & 25           & 0           & 20                \\
      \textbf{Image}                  & 6              & 512                       & LN            & False             & 0.2           & 0.004    & 50               & 200          & 0.01        & 10                \\
      \textbf{Pathfinder}             & 6              & 256                       & BN            & True              & 0.1           & 0.004    & 100              & 200          & 0           & 10                \\
      \textbf{Path-X}                 & 6              & 256                       & BN            & True              & 0.0           & 0.0005   & 32               & 100          & 0           & 40                \\
      \midrule
      \textbf{Speech Commands (Raw)}  & 6              & 128                       & BN            & True              & 0.1           & 0.01     & 20               & 200          & 0           & 20                \\
      \bottomrule
    \end{tabular}%
  }
  \caption{
    Hyperparameters for the S4 and DSS models. Exceptions for DSS are detailed in \S\ref{sec:experimental-setup}. (Top) LRA  and (Bottom) Speech Commands. LR is initial learning rate and WD is weight decay. BN and LN refer to Batch Normalization and Layer Normalization.
  }\label{tab:lra-hyperparams}
\end{table*}

\subsection{Learned Parameters of \dsssoftmax}\label{sec:learning-dss}

\begin{figure}[h]
    \centering
    \hspace*{-0.15in}
    \includegraphics[scale=0.38]{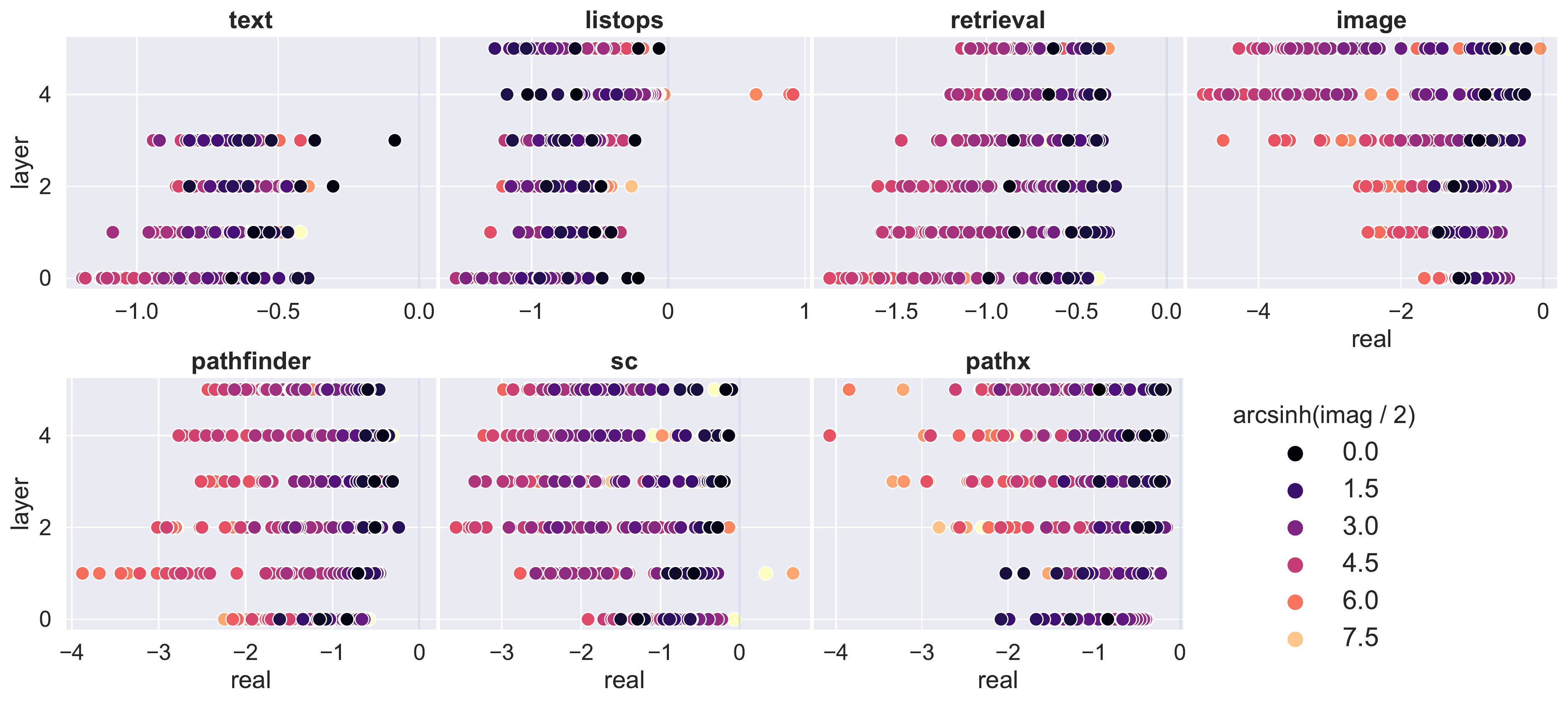}
    \caption{Trained $\Lambda$ in \dsssoftmax for tasks described in \S\ref{sec:experiments}.}
\label{figure:lambda-real}
\end{figure}

\begin{figure}[h]
    \centering
    \hspace*{-0.15in}
    \includegraphics[scale=0.38]{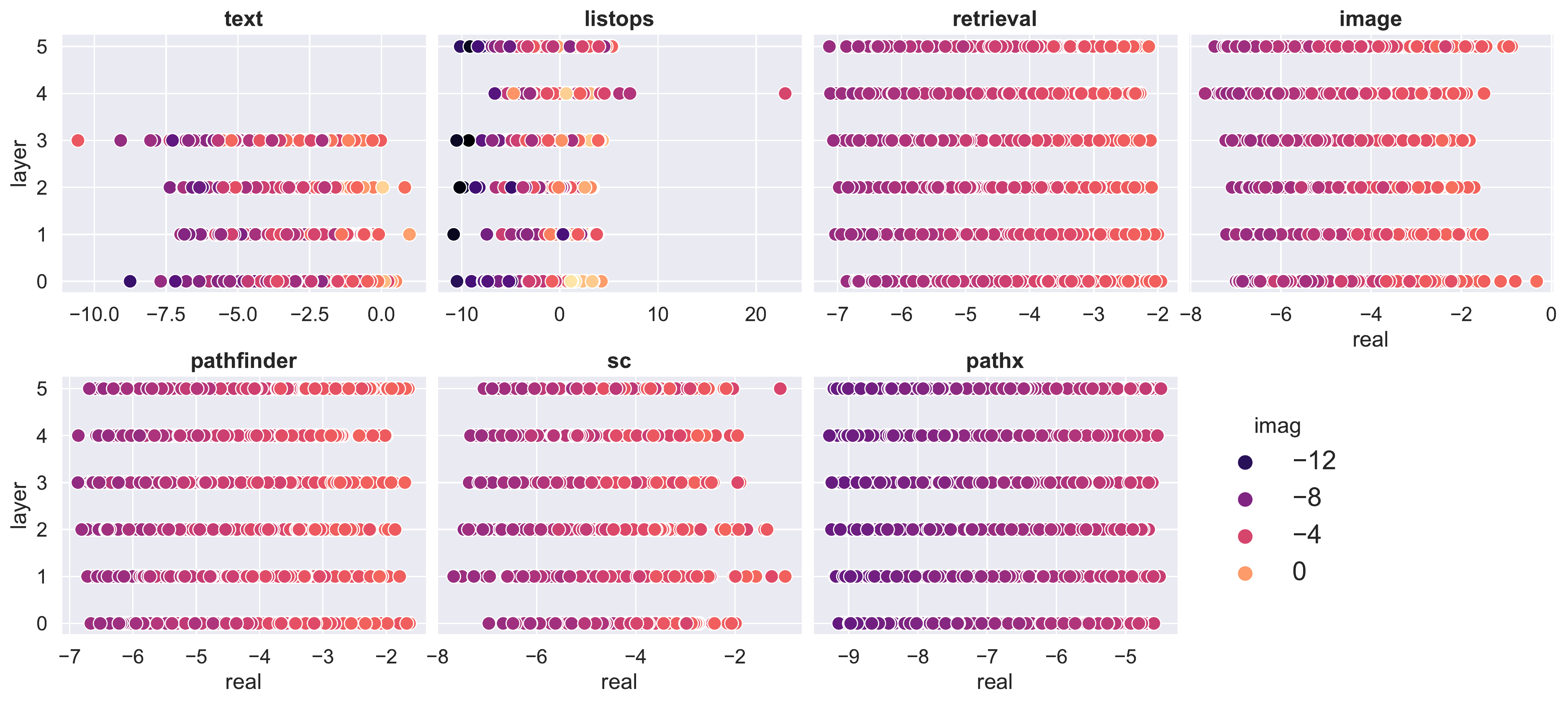}
    \caption{Trained $\Delta_{\log}$ in \dsssoftmax for tasks described in \S\ref{sec:experiments}.}
\label{figure:delta-real}
\end{figure}

\subsection{Implementation of \dsssoftmax}\label{sec:torch}

\begin{figure*}[h]
\centering
\begin{minipage}{.99\textwidth}
\begin{minted}
[
% frame=lines,
framesep=2mm,
baselinestretch=1.1,
fontsize=\footnotesize,
% linenos
]
{python}

def reciprocal(x, epsilon=1e-7):
    x_conj = x.conj()       # conjugate
    return x_conj / (x*x_conj + epsilon)

def dss_kernel(L):
    # L: kernel length
    # Lambda: [N 2],   log_dt: [H],   W: [H N 2]  (floats)
    Lambda, log_dt, W = get_layer_parameters()
    # complex parameter stored as 2 floats denoting real,
    # imaginary parts as ADAM moments are non-linear
    
    # convert reals to complex
    Lambda, W = map(torch.view_as_complex, (Lambda, W))      # [N], [H N]
    dt_Lambda = log_dt.exp().unsqueeze(-1) * Lambda          # [H L]
    pos = torch.arange(L, device=W.device)                   # [L]
    P = dt_Lambda.unsqueeze(-1) * pos                        # [H N L]
    
    # fast softmax using structure of P
    Lambda_gt_0 = Lambda.real > 0                            # [N]
    if Lambda_gt_0.any():
        with torch.no_grad():
            P_max = dt_Lambda * (Lambda_gt_0 * (L-1))        # [H N]
        P = P - P_max.unsqueeze(-1)
    S = P.exp()                                              # [H N L]
    dt_Lambda_neg = dt_Lambda * (1 - 2*Lambda_gt_0)          # [H N]
    # 1 / S.sum(-1) == num / den
    num = dt_Lambda_neg.exp() - 1                            # [H N]
    den = (dt_Lambda_neg * L).exp() - 1                      # [H N]
    W = W * num * reciprocal(den * Lambda)                   # [H N] 
    
    # mixture of softmaxes
    return torch.einsum('hn,hnl->hl', W, S).real             # [H L]

def state_space(u):
    # u: batch of input sequences
    # B: batch size, H: hidden size, L: sequence length
    B, H, L = u.shape
    # compute state space kernel for each of H coordinates
    K = dss_kernel(L)                                        # [H L]
    # multiply two degree L-1 polynomials 
    # (u0 + u1*z ... uL-1*z^L-1)(K0 + K1*z ... KL-1*z^L-1)
    # zero-pad them to degree 2L-1 to avoid wrap-around
    K_f = torch.fft.rfft(K, n=2*L)                           # [H L+1]
    u_f = torch.fft.rfft(u, n=2*L)                           # [B H L+1]
    y_f = K_f * u_f                                          # [B H L+1]
    y = torch.fft.irfft(y_f, n=2*L)[..., :L]                 # [B H L]
    # yi = ui*K0 + ... u0*Ki
    # residual connection, non-linearity, output projection not shown
    return y
\end{minted}
\end{minipage}
\caption{Core implementation of \dsssoftmax layer (\S\ref{sec:dss-layer}) in PyTorch.}\label{app:torch}
\end{figure*}

\end{document}